\newtheorem{theorem}{Theorem}[section]
\newtheorem{definition}[theorem]{Definition}
\newtheorem{remark}[theorem]{Remark}
\newcommand{\calcd}{\mathrm{d}}
\newcommand{\wass}{\mathrm{W}}
\newcommand{\SW}{\mathrm{SW}}
\newcommand{\PW}{\mathrm{PW}}
\newcommand{\dimension}{d}
\newcommand{\distone}{\eta}
\newcommand{\disttwo}{\mu}
\newcommand{\distthree}{\zeta}
\newcommand{\projNumber}{N}
\newcommand{\projIx}{n}
\newcommand{\projIxAlt}{n^\prime}
\newcommand{\suppNumber}{M}
\newcommand{\suppIx}{m}
\newcommand{\distOnePoint}{\mathbf{x}}
\newcommand{\distTwoPoint}{\mathbf{y}}
\newcommand{\distThreePoint}{\mathbf{z}}
\newcommand{\distMatrix}{\mathbf{D}}
\newcommand{\birkhoff}{\mathscr{B}}
\newcommand{\ortgroup}[1]{\mathscr{O}(#1)}
\newcommand{\symgroup}[1]{\mathcal{S}_{#1}}
\newcommand{\targetVal}{\theta}
\newcommand{\estim}[1]{\hat{\targetVal}_{#1}}
\newcommand{\E}{\mathbb{E}}
\newcommand{\R}[1]{\mathbb{R}^{#1}}
\newcommand{\Prob}{\mathbb{P}}
\newcommand{\ortd}{d}
\DeclareMathOperator*{\argmin}{arg\,min}
\begin{document}

%

%
\runningauthor{Rowland*, Hron*, Tang*, Choromanski, Sarlos, Weller}

\twocolumn[

\aistatstitle{Orthogonal Estimation of Wasserstein Distances}

\aistatsauthor{Mark Rowland$^{*1}$  Jiri Hron$^{*1}$  Yunhao Tang$^{*2}$  Krzysztof Choromanski$^{3}$  Tamas Sarlos$^{4}$ Adrian Weller$^{1,5}$}

\aistatsaddress{$^{1}$University of Cambridge, $^{2}$Columbia University, $^{3}$Google Brain, $^{4}$Google Research, $^{5}$The Alan Turing Institute}

]

\begin{abstract}
  Wasserstein distances are increasingly used in a wide variety of applications in machine learning. Sliced Wasserstein distances form an important subclass which may be estimated efficiently through one-dimensional sorting operations. In this paper, we propose a new variant of sliced Wasserstein distance, study the use of orthogonal coupling in Monte Carlo estimation of Wasserstein distances and draw connections with stratified sampling, and evaluate our approaches experimentally in a range of large-scale experiments in generative modelling and reinforcement learning.
\end{abstract}

\section{INTRODUCTION}


Wasserstein distances are a method of measuring distance between probability distributions, and are widely used in image processing \citep{RabinEtAl,Bonneel}, probability \citep{GradientFlows}, physics \citep{JKO} and economics \citep{OTEconomics}, and are increasingly used in machine learning \citep{wgan,improved-wgan,COT}. 
Wasserstein distances are popular as they take into account spatial information (unlike total variation distance), and can be defined between continuous and discrete distributions (unlike the Kullback-Leibler divergence). These factors have made Wasserstein distances particularly popular in defining objectives for generative modelling \citep{wgan,improved-wgan}.

However, exact computation of Wasserstein distances is costly, as it requires the solution of an optimal transport problem. This has motivated the development of a wide variety of computationally tractable approximations (see for example \citep{Sinkhorn,StocOptOT}). 
The sliced Wasserstein distance was proposed by \citet{RabinEtAl}, and exploits the fact that one-dimensional instances of optimal transport problems can be solved much quicker than the general case, working directly with one-dimensional projections of the probability distributions in question; see Section \ref{sec:wasserstein} for a more detailed exposition.

Although the move from Wasserstein distance to sliced Wasserstein distance often yields significant gains in computational tractability, we highlight two issues that remain. Firstly, the focus of sliced Wasserstein distance on one-dimensional marginals of probability distributions can lead to poorer quality results than true Wasserstein distance \citep{Bonneel}. Secondly, the evaluation of sliced Wasserstein distance itself generally requires Monte Carlo estimation, and thus enough Monte Carlo samples must be used to control the random fluctuations introduced by this stochastic method.

In this paper, we investigate these two shortcomings of the sliced Wasserstein distance and propose a new distance which incorporates the computational benefits of the sliced Wasserstein distance, whilst still retaining information from the high-dimensional distances. Given recent strong results based on Wasserstein distances and their tractable approximations, exploring related methods is an important area of study. However, our initial empirical results demonstrate only small benefits in some contexts.

We study a Monte Carlo sampling scheme for more efficient estimation of both sliced Wasserstein distance, and our novel variant. We conclude this section by highlighting our main contributions:
\begin{itemize}[leftmargin=*]
	\item In Section \ref{sec:projwasserstein}, we introduce a new variant of Wasserstein distance, which we term \emph{projected Wasserstein distance}, which incorporates aspects of both sliced Wasserstein distance and true Wasserstein distance.
	\item In Section \ref{sec:ortestimation}, we study a Monte Carlo method for more accurate estimation of sliced Wasserstein and projected Wasserstein distances, based on orthogonal couplings of projection vectors, and provide theoretical analysis and connections to notions of stratified sampling.
	\item In Section \ref{sec:experiments}, we empirically evaluate the performance of projected Wasserstein distance, and orthogonally-coupled estimation, on a variety of tasks, including high-dimensional generative modelling and reinforcement learning.
\end{itemize}

\section{WASSERSTEIN AND SLICED WASSERSTEIN DISTANCES}\label{sec:wasserstein}


We briefly review Wasserstein and sliced Wasserstein distances, and their role in modern machine learning. For a more detailed review of theory of Wasserstein distances and optimal transport more generally, see \citet{Villani}. In this paper, we consider these distances in the specific case of Euclidean base space $(\mathbb{R}^d, \| \cdot \|_2)$, and first recall the definition of Wasserstein distance at this level of generality.

\begin{definition}[Wasserstein distances]
	Let $p \geq 1$. The $p$-Wasserstein distance is defined on $\mathscr{P}_p(\mathbb{R}^\dimension)$, the set of probability distributions over $\mathbb{R}^\dimension$ with finite $p$\textsuperscript{th} moment, by:
	\begin{align*}
		\wass_p(\distone, \disttwo) =
			\left\lbrack
				\inf_{\gamma \in \Gamma(\distone,\disttwo)}
					\int_{\mathbb{R}^\dimension \times \mathbb{R}^\dimension}
						\| \mathbf{x}- \mathbf{y} \|^p_2
						\gamma(\calcd \mathbf{x}, \calcd \mathbf{y})
			\right\rbrack^{1/p} \, ,
	\end{align*}
	for all $\distone, \disttwo \in \mathscr{P}_p(\mathbb{R}^\dimension)$, 
	where $\Gamma(\distone, \disttwo) \subseteq \mathscr{P}(\mathbb{R}^\dimension \times \mathbb{R}^\dimension)$ is the set of joint distributions over $\mathbb{R}^\dimension \times \mathbb{R}^\dimension$ for which the marginal on the first $\dimension$ coordinates is $\distone$, and the marginal on the last $\dimension$ coordinates is $\disttwo$. An element $\gamma\in \Gamma(\distone, \disttwo)$ achieving the infimum is called an optimal transport plan between $\distone$ and $\disttwo$.
\end{definition}

A common problem in the image processing literature using Wasserstein distance is that of \emph{computing a barycentre} for a collection of measures \citep{fastwassersteinbarycentre,parallelWB}. In this problem, a collection of measures $\{\mu_j\}_{j=1}^J \subseteq \mathscr{P}_p(\mathbb{R}^d)$ and a collection of weightings $(\lambda_{j})_{j=1}^J \in \mathbb{R}_{>0}^J$ are given, and the following objective is posed:
\begin{align}\label{eq:barycentre}
	\argmin_{\eta \in \mathcal{Q}} \sum_{j=1}^J \lambda_j \wass^p_p(\eta, \mu_j)  \, .
\end{align}
In some sense, this generalises the notion of finding a centre of mass of a collection of points in Euclidean space to the ``centre of mass'' of a collection of probability measures. A special case of the barycentre problem which is common in the machine learning literature is that of \emph{distribution learning}, for which $J=1$, reducing Problem \eqref{eq:barycentre} to the following optimisation problem
\begin{align}\label{eq:distlearn}
\argmin_{\eta \in \mathcal{Q}} \wass^p_p(\eta, \mu) \, ,
\end{align}
for a given $\mu \in \mathscr{P}_p(\mathbb{R}^d)$, and a space of probability distributions $\mathcal{Q} \subseteq \mathscr{P}_p(\mathbb{R}^d)$. A typical application is in deep generative modelling, in which $\mu$ is the empirical distribution corresponding to some dataset, and $\mathcal{Q}$ is a set of distributions parametrised by a neural network \citep{wgan}.

One of the reasons that Wasserstein distances are not more commonly used in machine learning is that their evaluation requires solving an optimisation problem. As an example, computing $\wass^p_p(\distone, \disttwo)$ in the particular case where
\begin{align}\label{eq:finite_dists}
	\distone = \frac{1}{\suppNumber} \sum_{\suppIx=1}^\suppNumber \delta_{\distOnePoint_\suppIx} \, , \
	\disttwo = \frac{1}{\suppNumber} \sum_{\suppIx=1}^\suppNumber \delta_{\distTwoPoint_\suppIx}  \, ,
\end{align}
is expressed as the following linear program:
\begin{align}\label{eq:linprog}
\max_{\mathbf{T} \in \birkhoff_\suppNumber}\ \langle \mathbf{T}, \distMatrix \rangle / \suppNumber \, .
\end{align}
Here, Dirac delta function $\delta_{\distOnePoint_\suppIx}$ denotes the probability density function of idealized point mass at sample data point $\distOnePoint_\suppIx$,
$\distMatrix \in \mathbb{R}^{\suppNumber \times \suppNumber}$ specifies costs between elements of the supports of $\distone$ and $\disttwo$, so that $\distMatrix_{ij} = \| \distOnePoint_i - \distTwoPoint_j \|_2^p$ for all $i,j \in [\suppNumber]$, and $\birkhoff_\suppNumber = \{ \mathbf{A} \in \mathbb{R}^{\suppNumber \times \suppNumber} | \mathbf{A} \mathbf{1} = \mathbf{1} \, , \ \mathbf{A}^\top \mathbf{1} = \mathbf{1} \}$ is the Birkhoff polytope. For linear programs of this specific form, the most efficient known solution methods are matching algorithms, which can achieve $\mathcal{O}(\suppNumber^{5/2} \log \suppNumber)$ complexity; for large $M$, this cost can quickly become infeasible as an inner loop in a learning algorithm. Therefore \citep{wgan} \emph{heuristically} optimized equivalent dual form of $\wass^p_p$.

In contrast, for \emph{one-dimensional} probability distributions $\eta, \mu \in \mathscr{P}(\mathbb{R})$, the optimal transport plan $\gamma^\star \in \Gamma(\eta, \mu)$ may be written down analytically, and in the case of finitely-supported distributions, the computation of Wasserstein distance amounts to sorting the support of the distributions in question on the real line, with a complexity of only $\mathcal{O}(\suppNumber \log \suppNumber)$. This motivates an alternative to the Wasserstein distance which exploits the computational ease of optimal transport on the real line: \emph{sliced Wasserstein distances} \citep{RabinEtAl}. For a vector $\mathbf{v} \in S^{d-1}$, where $S^{d-1}$ is the~unit sphere in $\mathbb{R}^d$, we define the projection map $\Pi_\mathbf{v} : \mathbb{R}^d \rightarrow \mathbb{R}$ by $\Pi_\mathbf{v}(\mathbf{x}) = \langle \mathbf{v}, \mathbf{x} \rangle$, for all $\mathbf{x} \in \mathbb{R}^d$, and denote projection of distribution $\distone$ by $(\Pi_\mathbf{v})_\# \distone$. With this notion established, we may now precisely define sliced Wasserstein distances.

\begin{definition}[Sliced Wasserstein distances]
	Let $p \geq 1$. The $p$-sliced Wasserstein distance is defined on $\mathscr{P}_p(\mathbb{R}^\dimension)$ by
	\begin{align}\label{eq:sw}
		& \SW_p^p(\distone, \disttwo)  
		=
				\mathbb{E}_{\mathbf{v} \sim \mathrm{Unif}(S^{d-1})}\left\lbrack
				\wass^p_p( (\Pi_\mathbf{v})_\# \distone, (\Pi_\mathbf{v})_\# \disttwo)
				\right\rbrack
		\, , 
	\end{align}
	for all $\distone, \disttwo \in \mathscr{P}_p(\mathbb{R}^\dimension)$.
\end{definition}

Due to the intractable expectation appearing in Equation \eqref{eq:sw}, the sliced Wasserstein distance is typically estimated via Monte Carlo sampling
\begin{align}\label{eq:sw-estimator}
\widehat{\SW}_p^p(\distone, \disttwo)
=
\frac{1}{N} \sum_{n=1}^{N}
\wass^p_p((\Pi_{\mathbf{v}_n})_\# \distone, (\Pi_{\mathbf{v}_n})_\# \disttwo)\,  , 
\end{align}
where $\mathbf{v}_1,\ldots,\mathbf{v}_N \overset{\mathrm{i.i.d.}}{\sim} \mathrm{Unif}(S^{d-1})$. The precise algorithmic steps for this estimation in the case of empirical distributions $\eta = \frac{1}{\suppNumber} \sum_{\suppIx=1}^\suppNumber \delta_{\distOnePoint_\suppIx}$, $\mu = \frac{1}{\suppNumber} \sum_{\suppIx=1}^\suppNumber \delta_{\distTwoPoint_\suppIx}$ are given in Algorithm \ref{alg:sw}.

\begin{algorithm}
	\caption{Sliced Wasserstein estimation}
	\label{alg:sw}
	\begin{algorithmic}[1]
		\REQUIRE $\eta = \frac{1}{\suppNumber} \sum_{\suppIx=1}^\suppNumber \delta_{\distOnePoint_\suppIx}$, \ $\mu = \frac{1}{\suppNumber} \sum_{\suppIx=1}^\suppNumber \delta_{\distTwoPoint_\suppIx}$
		\FOR{ $\projIx=1$ \TO $\projNumber$}
			\STATE {Sample $\mathbf{v}_\projIx \sim \mathrm{Unif}(S^{d-1})$.}
			\STATE{Compute projected distributions:}
			\STATE{
				\ \ $\textstyle (\Pi_{\mathbf{v}_\projIx})_\# \distone = \frac{1}{\suppNumber} \sum_{\suppIx=1}^\suppNumber \delta_{\langle \mathbf{v}_\projIx, \distOnePoint_\suppIx \rangle}$}
			\STATE{\ $\textstyle (\Pi_{\mathbf{v}_\projIx})_\# \disttwo = \frac{1}{\suppNumber} \sum_{\suppIx=1}^\suppNumber \delta_{\langle \mathbf{v}_\projIx, \distTwoPoint_\suppIx \rangle}$}
			\STATE{Compute sorted lists of supports:}
			\STATE{\ $\textstyle (w_\suppIx)_{\suppIx=1}^\suppNumber \leftarrow \texttt{sort}((\langle \mathbf{v}_\projIx , \distOnePoint_\suppIx \rangle)_{\suppIx=1}^\suppNumber)$}
			\STATE{\ $\textstyle (z_\suppIx)_{\suppIx=1}^\suppNumber \leftarrow \texttt{sort}((\langle \mathbf{v}_\projIx , \distTwoPoint_\suppIx \rangle)_{\suppIx=1}^\suppNumber)$}
			\STATE{Compute one-dimensional Wasserstein distance:}
			\STATE{\ $\textstyle \wass^p_p( (\Pi_{\mathbf{v}_\projIx})_\# \distone, (\Pi_{\mathbf{v}_\projIx})_\# \disttwo)\! =\! \frac{1}{\suppNumber} \sum_{\suppIx=1}^\suppNumber |w_\suppIx - z_\suppIx|^p $}\label{eqline:sw:distances}
		\ENDFOR
		\RETURN		
		$\frac{1}{\projNumber} \sum_{\projIx=1}^\projNumber \wass^p_p((\Pi_{\mathbf{v}_\projIx})_\# \distone,(\Pi_{\mathbf{v}_\projIx})_\# \disttwo)$
	\end{algorithmic}
\end{algorithm}

Sliced Wasserstein distances were originally proposed for image processing applications \citep{RabinEtAl,Bonneel} to avoid expensive computation with true Wasserstein distances. More recently, sliced Wasserstein distances have also been used in deep generative modelling \citep{swae,swgan,swgenmodels}.

\section{THE PROJECTED WASSERSTEIN DISTANCE}\label{sec:projwasserstein}


Whilst sliced Wasserstein distances bypass the computational bottleneck for Wasserstein distances (namely, solving the linear program in Problem \eqref{eq:linprog}) required for each evaluation, they exhibit different behaviour from true Wasserstein distance, which in many cases may be undesirable. We offer an intuition as to why the qualitative properties of sliced Wasserstein and true Wasserstein distances differ. Inspecting Algorithm \ref{alg:sw}, we note that within the main loop, the random vector $\mathbf{v}_{\projIx}$ plays \emph{two} roles: firstly, as the determiner of the matching between the projected particles $(\langle \mathbf{v}_\projIx, \distOnePoint_\suppIx \rangle)_{\suppIx = 1}^\suppNumber$, $(\langle \mathbf{v}_\projIx, \distTwoPoint_\suppIx \rangle)_{\suppIx = 1}^\suppNumber$; and secondly, in the computation of the distances between the projected particles. Roughly speaking, this may be thought of as introducing some type of ``bias''.

This is similar in flavour to the phenomenon observed by \citet{doubleQlearning} in the context of Q-learning, in which the maximum of a collection of samples is shown to be biased (over)estimate of the maximum of the corresponding population means. Indeed, \citet{doubleQlearning} observes that this phenomenon has a long history (see e.g. \citet{OptimisersCurse,PostdecisionSurprises}).

Suppose we were to use separate projections to compute the distances at Line \ref{eqline:sw:distances} of Algorithm \ref{alg:sw}. More precisely, suppose we sample $\mathbf{v}^\prime_\projIx \sim \mathrm{Unif}(S^{d-1})$ independently of $\mathbf{v}_\projIx$, and introduce the notation $\sigma_{\mathbf{v}_\projIx} : [\suppNumber] \rightarrow [\suppNumber]$ for the bijective mapping with the property that $\langle \mathbf{v}_\projIx, \distOnePoint_i \rangle < \langle \mathbf{v}_\projIx, \distOnePoint_j \rangle \implies \langle \mathbf{v}_\projIx, \distTwoPoint_{\sigma_{\mathbf{v}_\projIx}(i)} \rangle \leq \langle \mathbf{v}_\projIx, \distTwoPoint_{\sigma_{\mathbf{v}_\projIx}(j)} \rangle$. Now consider replacing Line \ref{eqline:sw:distances} of Algorithm \ref{alg:sw} with the following computation:
\begin{align}\label{eq:unbiaseddistance}
\frac{1}{\suppNumber} \sum_{\suppIx=1}^\suppNumber | \langle \mathbf{v}^\prime_\projIx, \distOnePoint_i \rangle -\langle \mathbf{v}^\prime_\projIx, \distTwoPoint_{\sigma_{\mathbf{v}_\projIx}(i)} \rangle |^p \, ,
\end{align}
noting that in degenerate cases there may exist more than one such $\sigma_\mathbf{v}$, in which case we select uniformly from this set of induced couplings. 
The computation in Expression \eqref{eq:unbiaseddistance} removes the source of ``bias'' identified previously.

Further, we observe that in the special case of $p=2$, the use of a second projection to compute the costs can itself be interpreted as an unbiased estimator (up to a multiplicative scaling) of the original pairwise costs themselves. This motivates the \emph{projected Wasserstein distance}, which we define formally below, along with the prerequisite notion of \emph{induced couplings}.

\begin{definition}[Induced couplings]\label{def:couplings}
	Given two empirical distributions $\distone = \frac{1}{\suppNumber}\sum_{\suppIx=1}^{\suppNumber}\delta_{\distOnePoint_\suppIx}$ $\disttwo=\frac{1}{\suppNumber}\sum_{\suppIx=1}^{\suppNumber}\delta_{\distTwoPoint_\suppIx}$ and a vector $\mathbf{v} \in S^{d-1}$, we define the \emph{couplings induced by $\mathbf{v}$} to be the set $\Sigma_\mathbf{v}$ of bijective maps $[\suppNumber] \rightarrow [\suppNumber]$ that specify an optimal matching for the projected particles $(\langle \mathbf{v}, \distOnePoint_\suppIx \rangle)_{\suppIx = 1}^\suppNumber$, $(\langle \mathbf{v}, \distTwoPoint_\suppIx \rangle)_{\suppIx = 1}^\suppNumber$, in the sense that a matching $\sigma :[\suppNumber] \rightarrow [\suppNumber]$ is optimal when the condition $\langle \mathbf{v}, \distOnePoint_i \rangle < \langle \mathbf{v}, \distOnePoint_j \rangle$ iff $\langle \mathbf{v}, \distTwoPoint_{\sigma_{\mathbf{v}}(i)} \rangle < \langle \mathbf{v}, \distTwoPoint_{\sigma_{\mathbf{v}}(j)} \rangle$ is satisfied. Note that typically $\Sigma_\mathbf{v}$ is a set of size one, but in degenerate cases, there may be more than one induced coupling $\sigma_\mathbf{v}$ for a given vector $\mathbf{v}$.
\end{definition}
	
\begin{definition}[Projected Wasserstein distances]
	For $p \geq 1$, the $p$-projected Wasserstein distance between $\distone$ and $\disttwo$ is defined as:
	\begin{align}\label{eq:pw}
		\textstyle
		&\PW_p^p
		  \left(
			\distone, \disttwo
		  \right) 
		=
			\mathbb{E}_{\mathbf{v} \sim \mathrm{Unif}(S^{d-1})} \!\!
			\left\lbrack\frac{1}{\suppNumber} \!\!				
				\sum_{\suppIx=1}^\suppNumber \|\mathbf{x}_\suppIx - \mathbf{y}_{\sigma_{\mathbf{v}}(\suppIx)} \|_2^p	
			\right\rbrack
		,  
	\end{align}
	where $\sigma_\mathbf{v} : [\suppNumber] \rightarrow [\suppNumber]$ is the coupling induced by $\mathbf{v}$.
\end{definition}

Projected Wasserstein distances are thus an alternative to Wasserstein distances that enjoy similar computational efficiency to sliced Wasserstein distances, but correct for the ``bias'' implicit in the definition of sliced Wasserstein distances. In the remainder of this section, we explore a variety of theoretical and computational aspects of projected Wasserstein distances, and in Section \ref{sec:experiments} we explore the use of projected Wasserstein distance as an objective in deep generative modelling and reinforcement learning. 

\subsection{Theoretical properties}

Having motivated the projected Wasserstein distance, we now establish some of its basic properties.

\begin{restatable}{proposition}{propMetric}
	Projected Wasserstein distance $\PW_p$ is a metric on the space $\mathscr{P}_{(\suppNumber)}(\mathbb{R}^d) = \{ \frac{1}{\suppNumber} \sum_{\suppIx=1}^\suppNumber \delta_{\distOnePoint_\suppIx} | \distOnePoint_\suppIx \in \mathbb{R}^\dimension \ \ \text{for all} \ \suppIx \in [\suppNumber] \} \subset \mathscr{P}(\mathbb{R}^d)$.
\end{restatable}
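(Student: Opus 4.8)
The plan is to verify the four metric axioms for $\PW_p$ on $\mathscr{P}_{(\suppNumber)}(\R{\dimension})$, viewing an element as the measure $\frac{1}{\suppNumber}\sum_{\suppIx=1}^\suppNumber\delta_{\distOnePoint_\suppIx}$ (so tuples of support points that differ by a permutation must be identified). Finiteness and non-negativity are immediate from \eqref{eq:pw}, since the integrand is bounded by $\max_{i,j}\|\distOnePoint_i-\distTwoPoint_j\|_2^p<\infty$. I would first record that the value in \eqref{eq:pw} is unchanged under relabelling the support of $\distone$ or $\disttwo$ (a relabelling simply conjugates every induced coupling), so $\PW_p$ is well defined on $\mathscr{P}_{(\suppNumber)}(\R{\dimension})$.

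For symmetry, fix $\mathbf{v}$ and note that $\sigma\in\Sigma_\mathbf{v}$ is an optimal monotone matching from the sorted projections of $\distone$ to those of $\disttwo$ if and only if $\sigma^{-1}$ is one from $\disttwo$ to $\distone$; hence inversion is a bijection between the relevant sets of induced couplings, and re-indexing the sum by $\suppIx\mapsto\sigma(\suppIx)$ gives $\frac1\suppNumber\sum_\suppIx\|\distTwoPoint_\suppIx-\distOnePoint_{\sigma^{-1}(\suppIx)}\|_2^p=\frac1\suppNumber\sum_\suppIx\|\distOnePoint_\suppIx-\distTwoPoint_{\sigma(\suppIx)}\|_2^p$. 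Averaging over $\mathbf{v}$ yields $\PW_p(\distone,\disttwo)=\PW_p(\disttwo,\distone)$.

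For the identity of indiscernibles, if $\PW_p(\distone,\disttwo)=0$ then the non-negative integrand in \eqref{eq:pw} vanishes for $\mathrm{Unif}(S^{\dimension-1})$-a.e.\ $\mathbf{v}$; fixing one such $\mathbf{v}$ and an induced $\sigma_\mathbf{v}$ forces $\distOnePoint_\suppIx=\distTwoPoint_{\sigma_\mathbf{v}(\suppIx)}$ for every $\suppIx$, so $\distone=\disttwo$. Conversely, if $\distone=\disttwo$ then $(\distTwoPoint_\suppIx)$ is a permutation of $(\distOnePoint_\suppIx)$; for $\mathbf{v}$ outside the finite union of hyperplanes on which two \emph{distinct} support points (of $\distone$ or $\disttwo$) have equal projection, the sorted projections of $\distone$ and $\disttwo$ coincide, and matching in sorted order pairs each $\distOnePoint_\suppIx$ with a point of $\disttwo$ having the same projection and, since the only remaining projection ties are between coincident points, the same location; thus the integrand is $0$ for a.e.\ $\mathbf{v}$ and $\PW_p(\distone,\disttwo)=0$. (An alternative for one direction: the pointwise bound $\wass_p^p((\Pi_\mathbf{v})_\#\distone,(\Pi_\mathbf{v})_\#\disttwo)\le\frac1\suppNumber\sum_\suppIx\|\distOnePoint_\suppIx-\distTwoPoint_{\sigma_\mathbf{v}(\suppIx)}\|_2^p$, valid because $\|\mathbf{v}\|_2=1$, gives $\SW_p\le\PW_p$ and lets one inherit positive-definiteness from $\SW_p$.)

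The main work is the triangle inequality $\PW_p(\distone,\distthree)\le\PW_p(\distone,\disttwo)+\PW_p(\disttwo,\distthree)$, with $\distthree=\frac1\suppNumber\sum\delta_{\distThreePoint_\suppIx}$. The key structural fact is that one-dimensional optimal matchings compose: for a.e.\ $\mathbf{v}$, if $\sigma_\mathbf{v}$ is induced for $(\distone,\disttwo)$ and $\tau_\mathbf{v}$ for $(\disttwo,\distthree)$, then $\rho_\mathbf{v}:=\tau_\mathbf{v}\circ\sigma_\mathbf{v}$ is induced for $(\distone,\distthree)$, since it carries the $k$-th smallest projection of $\distone$ to the $k$-th smallest projection of $\distthree$. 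For each such $\mathbf{v}$, the triangle inequality in $(\R{\dimension},\|\cdot\|_2)$ gives $\|\distOnePoint_\suppIx-\distThreePoint_{\rho_\mathbf{v}(\suppIx)}\|_2\le\|\distOnePoint_\suppIx-\distTwoPoint_{\sigma_\mathbf{v}(\suppIx)}\|_2+\|\distTwoPoint_{\sigma_\mathbf{v}(\suppIx)}-\distThreePoint_{\rho_\mathbf{v}(\suppIx)}\|_2$, and Minkowski's inequality in $\ell^p(\{1,\dots,\suppNumber\})$, followed by the re-indexing $\suppIx\mapsto\sigma_\mathbf{v}(\suppIx)$ in the second term, yields
\begin{align*}
\Big(\tfrac1\suppNumber\sum_\suppIx\|\distOnePoint_\suppIx-\distThreePoint_{\rho_\mathbf{v}(\suppIx)}\|_2^p\Big)^{1/p}
&\le\Big(\tfrac1\suppNumber\sum_\suppIx\|\distOnePoint_\suppIx-\distTwoPoint_{\sigma_\mathbf{v}(\suppIx)}\|_2^p\Big)^{1/p} \\
&\quad+\Big(\tfrac1\suppNumber\sum_\suppIx\|\distTwoPoint_\suppIx-\distThreePoint_{\tau_\mathbf{v}(\suppIx)}\|_2^p\Big)^{1/p}.
\end{align*}
This is a pointwise-in-$\mathbf{v}$ inequality among the $p$-th roots of the three integrands; applying Minkowski's inequality once more, now in $L^p(\mathrm{Unif}(S^{\dimension-1}))$, and invoking $\PW_p(\cdot,\cdot)=(\E_\mathbf{v}[\cdot])^{1/p}$, finishes the proof. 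The only point needing care throughout is the degenerate case where $\Sigma_\mathbf{v}$ is not a singleton: once coincident support points are separated out this occurs only on a null set of $\mathbf{v}$, and within a block of coincident points every admissible matching contributes the same amount, so the averaging in \eqref{eq:pw} does not affect any of the steps above. I expect the composition-of-monotone-couplings step, together with this degeneracy bookkeeping, to be the only non-routine parts.
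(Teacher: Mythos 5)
Your proposal is correct and takes essentially the same route as the paper: both proofs rest on the observation that, for a fixed direction $\mathbf{v}$, the couplings induced between the three empirical measures are compatible (your composition $\tau_{\mathbf{v}}\circ\sigma_{\mathbf{v}}$ plays the role of the paper's simultaneous sorted re-indexing, under which all induced couplings become the identity), after which the triangle inequality is just Minkowski's inequality, and your handling of symmetry, identity of indiscernibles, and degenerate ties matches the paper's argument in substance.
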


\begin{restatable}{proposition}{propIneq}
	We have the following inequalities
	\begin{align}
		\SW_p(\eta, \mu) \leq
		\wass_p(\eta, \mu) \leq
		\PW_p(\eta, \mu) \, ,
	\end{align}
	for all $\eta, \mu \in \mathscr{P}_{(\suppNumber)}(\mathbb{R}^d)$, for all $p \geq 1$.
\end{restatable}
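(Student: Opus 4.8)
The plan is to prove the two inequalities separately. The left inequality $\SW_p \leq \wass_p$ is the standard fact that projections contract Wasserstein distance and in fact holds for arbitrary measures in $\mathscr{P}_p(\R{\dimension})$; the right inequality $\wass_p \leq \PW_p$ exploits the equal-weight empirical structure of $\mathscr{P}_{(\suppNumber)}(\R{\dimension})$, namely that every permutation of atoms is a valid transport plan.

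For $\SW_p \leq \wass_p$, I would fix $\mathbf{v} \in S^{d-1}$, take an optimal transport plan $\gamma \in \Gamma(\distone,\disttwo)$ for $\wass_p$, and note that the pushforward $(\Pi_\mathbf{v} \times \Pi_\mathbf{v})_\# \gamma$ is a (generally sub-optimal) coupling of $(\Pi_\mathbf{v})_\# \distone$ and $(\Pi_\mathbf{v})_\# \disttwo$. Bounding the one-dimensional transport cost by the cost of this particular coupling and using $|\langle \mathbf{v}, \distOnePoint - \distTwoPoint \rangle| \leq \|\distOnePoint - \distTwoPoint\|_2$ (Cauchy--Schwarz, since $\|\mathbf{v}\|_2 = 1$) gives $\wass_p^p((\Pi_\mathbf{v})_\# \distone, (\Pi_\mathbf{v})_\# \disttwo) \leq \wass_p^p(\distone,\disttwo)$ for every $\mathbf{v}$; taking $\E_{\mathbf{v} \sim \mathrm{Unif}(S^{d-1})}$ and then $p$-th roots closes this half.

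For $\wass_p \leq \PW_p$, the key observation is that for each $\mathbf{v}$ the induced permutation $\sigma_\mathbf{v}$ gives a transport plan $\pi_\mathbf{v} = \tfrac{1}{\suppNumber} \sum_{\suppIx=1}^{\suppNumber} \delta_{(\distOnePoint_\suppIx, \distTwoPoint_{\sigma_\mathbf{v}(\suppIx)})} \in \Gamma(\distone,\disttwo)$: relabelling the atoms of $\disttwo$ by a bijection leaves the uniform measure unchanged, so both marginals are correct. Hence $\tfrac{1}{\suppNumber}\sum_{\suppIx=1}^{\suppNumber}\|\distOnePoint_\suppIx - \distTwoPoint_{\sigma_\mathbf{v}(\suppIx)}\|_2^p \geq \wass_p^p(\distone,\disttwo)$ for every $\mathbf{v}$ (and for any selection of $\sigma_\mathbf{v}$ in the degenerate case $|\Sigma_\mathbf{v}|>1$). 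Taking $\E_{\mathbf{v}}$ of the left-hand side, which by definition is $\PW_p^p(\distone,\disttwo)$, and then $p$-th roots yields $\PW_p(\distone,\disttwo) \geq \wass_p(\distone,\disttwo)$.

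I do not expect a serious obstacle. The only subtlety is the handling of degenerate $\mathbf{v}$ with $|\Sigma_\mathbf{v}| > 1$: since every element of $\Sigma_\mathbf{v}$ is a permutation and hence a valid coupling, the lower bound in the second step is insensitive to the choice, and the quantity inside $\E_{\mathbf{v}}$ in the definition of $\PW_p$ can be taken to be the minimum over the finitely many permutations realising $\Sigma_\mathbf{v}$, which is a well-behaved function of $\mathbf{v}$. The conceptual point worth flagging is that the second inequality genuinely requires $\distone,\disttwo$ to be equal-weight empirical measures with the same number of atoms --- precisely the setting $\mathscr{P}_{(\suppNumber)}(\R{\dimension})$ on which $\PW_p$ is defined --- whereas the first inequality needs no such restriction.
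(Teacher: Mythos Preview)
Your proposal is correct and follows essentially the same approach as the paper. For $\SW_p \leq \wass_p$ the paper simply cites the result as well known (referencing Bonnotte), while you spell out the standard pushforward-of-the-optimal-coupling argument; for $\wass_p \leq \PW_p$ both you and the paper observe that each induced permutation $\sigma_{\mathbf{v}}$ yields a feasible coupling in $\Gamma(\distone,\disttwo)$, so its cost dominates $\wass_p^p(\distone,\disttwo)$ pointwise in $\mathbf{v}$, and then take expectations.
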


\subsection{Monte Carlo estimation of PW distance}

Just as sliced Wasserstein distance requires Monte Carlo estimation, so too does projected Wasserstein distance. The estimation algorithm is similar to Algorithm \ref{alg:sw} for sliced Wasserstein distance (as our motivation for projected Wasserstein distance might suggest), and is presented as Algorithm \ref{alg:pw}; the crucial difference is the contribution calculation at Line 9. Within Algorithm \ref{alg:pw}, \texttt{argsort} can be taken to be any subroutine that computes an induced coupling between the projected samples. One implementation that runs in $\mathcal{O}(\suppNumber \log \suppNumber)$ time consists of sorting the two lists of real numbers, keeping track of the permutations that sort them, and then computing one with the inverse of the other.

\begin{algorithm}
	\caption{Projected Wasserstein estimation}
	\label{alg:pw}
	\begin{algorithmic}[1]
		\REQUIRE $\eta = \frac{1}{\suppNumber} \sum_{\suppIx=1}^\suppNumber \delta_{\distOnePoint_\suppIx}$, \ $\mu = \frac{1}{\suppNumber} \sum_{\suppIx=1}^\suppNumber \delta_{\distTwoPoint_\suppIx}$
		\STATE Sample $(\mathbf{v}_\projIx)_{\projIx=1}^\projNumber \overset{\mathrm{i.i.d.}}{\sim} \mathrm{Unif}(S^{d-1})$
		\FOR{ $\projIx=1$ \TO $\projNumber$}
		\STATE {Compute projected distributions:}
		\STATE{\ \ $\textstyle (\Pi_{\mathbf{v}_\projIx})_\# \distone = \frac{1}{\suppNumber} \sum_{\suppIx=1}^\suppNumber \delta_{\langle \mathbf{v}_\projIx, \distOnePoint_\suppIx \rangle}$}
		\STATE{\ \ $\textstyle (\Pi_{\mathbf{v}_\projIx})_\# \disttwo = \frac{1}{\suppNumber} \sum_{\suppIx=1}^\suppNumber \delta_{\langle \mathbf{v}_\projIx, \distTwoPoint_\suppIx \rangle}$}
		\STATE{Compute optimal matching for projected distributions:}
		\STATE{\ \ $\sigma_{\mathbf{v}_\projIx} \leftarrow \texttt{argsort}((\langle \mathbf{v}_\projIx , \distOnePoint_\suppIx \rangle)_{\suppIx=1}^\suppNumber, (\langle \mathbf{v}_\projIx , \distTwoPoint_\suppIx \rangle)_{\suppIx=1}^\suppNumber)$}
		\STATE{Compute contribution from coupling:}
		\STATE{\ \ $\textstyle  \frac{1}{\suppNumber} \sum_{\suppIx=1}^\suppNumber \|\distOnePoint_\suppIx - \distTwoPoint_{\sigma_{\mathbf{v}_\projIx}(\suppIx)}\|_2^p$
		} \label{algline:pwdist}
		\ENDFOR
		\RETURN
		$\frac{1}{\projNumber} \sum_{\projIx=1}^\projNumber \!  \frac{1}{\suppNumber}\! \sum_{\suppIx=1}^\suppNumber \|\distOnePoint_\suppIx - \distTwoPoint_{\sigma_{\mathbf{v}_\projIx}(\suppIx)}\|_2^p $
	\end{algorithmic}
\end{algorithm}

Algorithm \ref{alg:pw} thus allows any method utilising Wasserstein or sliced Wasserstein distances, such as Problems \eqref{eq:barycentre} and \eqref{eq:distlearn} instead to use projected Wasserstein distance.

\subsection{Johnson-Lindenstrauss estimation of PW distance}

To conclude this section, we discuss several variations on Algorithm \eqref{alg:pw} which may allow for more efficient estimation of projected Wasserstein distance. These variants are motivated by the difference in computational burden between sliced Wasserstein and projected Wasserstein distances; whilst Algorithm \ref{alg:sw} for Monte Carlo estimation of sliced Wasserstein distances deals entirely with one-dimensional projections, Algorithm \ref{alg:pw} for estimation of the projected Wasserstein distance requires computation of distances between the original (unprojected) datapoints.

However, a key observation is that in the course of computing induced couplings in Algorithm \ref{alg:pw}, many one-dimensional projections of the support of the distributions concerned have been computed. These projections can be pooled, and thus considered collectively as constituting a \emph{random projection} of the support of the distribution, in the vein of the Johnson-Lindenstrauss transform \citep{JLT}. This random projection can then be used to estimate distances between support points of the distributions, without having to work with the original high-dimensional points themselves. More concretely, this can be achieved by replacing Line \ref{algline:pwdist} of Algorithm \ref{alg:pw} with the following computation:
\begin{align}
	\frac{1}{\suppNumber} \sum_{\suppIx=1}^\suppNumber \frac{d}{\projNumber} \sum_{\projIxAlt=1}^{\projNumber} |\langle \mathbf{v}_{\projIxAlt}, \mathbf{x}_{\suppIx} - \mathbf{y}_{\sigma_{\mathbf{v}_{\projIx}}(\suppNumber)} \rangle|^p \, .
\end{align}
In particular, in the case $p=2$, this yields an unbiased estimator of the distance computed in Line \ref{algline:pwdist} of Algorithm \ref{alg:pw}.

\section{ORTHOGONAL ESTIMATION FOR SLICED AND PROJECTED WASSERSTEIN DISTANCES}\label{sec:ortestimation}


Having introduced the projected Wasserstein distance, we now turn to the second contribution in this paper: developing understanding of improved methods for estimating both sliced Wasserstein and projected Wasserstein distances. \citet{AutomatedColorGrading} argue implicitly for using an orthogonal coupling of projection vectors in estimating the sliced Wasserstein distance, and \cite{swgenmodels} put forward an approach to generative modelling where a set of orthogonal projection directions are \emph{learnt} from data, to be used in the context of sliced Wasserstein estimation.

Here, we consider the general approach of using orthogonal projection directions within sliced Wasserstein and projected Wasserstein distances. We show the (perhaps surprising) results that: (i)~there is a strong connection between orthogonal coupling of projection directions and notions of stratified sampling; (ii)~contrary to the intuition of \citet{AutomatedColorGrading}, using orthogonal projection directions can actually \emph{worsen} the performance of sliced Wasserstein estimation (as measured by estimator variance); but (iii)~orthogonal projection directions \emph{always} lead to an improvement in estimator variance for the projected Wasserstein distance in the case $\suppNumber = 2$; we conjecture that this holds more generally. 
Besides the~motivation presented in \Cref{sec:projwasserstein}, the~projected Wasserstein distances therefore serve an~important role in our theoretical understanding of the impact of orthogonally coupled projection directions on estimation of the~sliced Wasserstein distances.
Details on how to perform practical Monte Carlo sampling from $\mathrm{UnifOrt}(S^{d-1}; N)$, as well as computationally efficient approximate sampling algorithms, are provided in the Appendix.

\subsection{Orthogonal couplings}

To make precise the notion of orthogonal projection directions, we first make a preliminary definition.

\begin{definition}\label{def:unifort}
	Let $N \leq d$. 
	The probability distribution $\mathrm{UnifOrt}(S^{d-1};N) \in \mathscr{P}((S^{d-1})^N)$ is defined as the joint distribution of $N$ rows of a random orthogonal matrix drawn from Haar measure on the orthogonal group $\ortgroup{d}$. If $N$ is a multiple of d, we define the distribution $\mathrm{UnifOrt}(S^{d-1};N)$ to be that given by concatenating $N/d$  independent copies of random variables drawn from $\mathrm{UnifOrt}(S^{d-1};d)$.
\end{definition}

A collection of random vectors $(\mathbf{v}_\projIx)_{\projIx=1}^\projNumber$ drawn from $\mathrm{UnifOrt}(S^{d-1};N)$ has the property that each random vector $\mathbf{v}_\projIx$ is marginally distributed as $\mathrm{Unif}(S^{d-1})$, and all vectors are mutually orthogonal almost surely. The broad idea is to replace the i.i.d.\ projection directions $(\mathbf{v}_\projIx)_{\projIx=1}^\projNumber$ appearing in Algorithms \ref{alg:sw} and \ref{alg:pw} with a sample from $\mathrm{UnifOrt}(S^{d-1}; N)$; Algorithm \ref{alg:sw-ort} specifies this adjustment precisely in the case of sliced Wasserstein estimation, with the new sampling mechanism shown in red. The adjustment for projected Wasserstein estimation is analogous, and is given in the appendix due to space constraints.

\begin{algorithm}
	\caption{Orthogonal sliced Wasserstein estimation}
	\label{alg:sw-ort}
	\begin{algorithmic}[1]
		\REQUIRE $\eta = \frac{1}{\suppNumber} \sum_{\suppIx=1}^\suppNumber \delta_{\distOnePoint_\suppIx}$, \ $\mu = \frac{1}{\suppNumber} \sum_{\suppIx=1}^\suppNumber \delta_{\distTwoPoint_\suppIx}$
		\STATE \textcolor{red}{Sample $(\mathbf{v}_\projIx)_{\projIx=1}^\projNumber \sim \mathrm{UnifOrt}(S^{d-1}; \projNumber)$}
		\FOR{ $\projIx=1$ \TO $\projNumber$}
		\STATE {Compute projected distributions:}
		\STATE{\ \ $\textstyle (\Pi_{\mathbf{v}_\projIx})_\# \distone = \frac{1}{\suppNumber} \sum_{\suppIx=1}^\suppNumber \delta_{\langle \mathbf{v}_\projIx, \distOnePoint_\suppIx \rangle}$}
		\STATE{\ \ $\textstyle (\Pi_{\mathbf{v}_\projIx})_\# \disttwo = \frac{1}{\suppNumber} \sum_{\suppIx=1}^\suppNumber \delta_{\langle \mathbf{v}_\projIx, \distTwoPoint_\suppIx \rangle}$}
		\STATE{Compute sorted lists of supports:}
		\STATE{\ \ $\textstyle (w_\suppIx)_{\suppIx=1}^\suppNumber \leftarrow \texttt{sort}((\langle \mathbf{v}_\projIx , \distOnePoint_\suppIx \rangle)_{\suppIx=1}^\suppNumber)$}
		\STATE{\ \ $\textstyle (z_\suppIx)_{\suppIx=1}^\suppNumber \leftarrow \texttt{sort}((\langle \mathbf{v}_\projIx , \distTwoPoint_\suppIx \rangle)_{\suppIx=1}^\suppNumber)$}
		\STATE{Compute one-dimensional Wasserstein distance:}
		\STATE{\ \ $\textstyle \wass^p_p( (\Pi_{\mathbf{v}_\projIx})_\# \distone, (\Pi_{\mathbf{v}_\projIx})_\# \disttwo)\! =\! \frac{1}{\suppNumber} \sum_{\suppIx=1}^\suppNumber |w_\suppIx - z_\suppIx|^p$
		}	
		\ENDFOR
		\RETURN
		$\frac{1}{\projNumber} \sum_{\projIx=1}^\projNumber \wass^p_p((\Pi_{\mathbf{v}_\projIx})_\# \distone,(\Pi_{\mathbf{v}_\projIx})_\# \disttwo) $
	\end{algorithmic}
\end{algorithm}


\subsection{Analysis of orthogonal couplings}

We now compare the~\textit{mean squared error} (MSE) of i.i.d.\ and orthogonal estimation of the~sliced and projected Wasserstein distance between distributions $(\distone, \disttwo) \in \mathscr{P}_{(\suppNumber)}(\mathbb{R}^d) \times \mathscr{P}_{(\suppNumber)}(\mathbb{R}^d)$. As the~MSE of an~unbiased estimator is equal to its variance, improving upon i.i.d.\ requires the~cross-covariance induced by sampled directions to be negative. This motivates us to first study a~class of \textit{stratified} estimators which is proved to be statistically superior to the~i.i.d.\ approach. The~main drawback of the~stratification scheme is its $\mathcal{O}( (\suppNumber !)^2 )$ computational complexity.

The~importance of stratified estimators for our purposes comes from the~fact that their improved accuracy is due to the~increase in average diversity of induced couplings (cf.\ \Cref{def:couplings}), a~property that is also typical for the~orthogonally coupled estimators. Orthogonal coupling can therefore be seen as a~computationally tractable approximation to stratification. As we observe in experiments, this approximation usually indeed leads to improved MSE. However, we prove that the~improvement in the~case of sliced Wasserstein estimation is not universal over all pairs of distributions $(\eta, \mu) \in \ \mathscr{P}_{p}(\mathbb{R}^d) \times \mathscr{P}_{p}(\mathbb{R}^d)$, contrary to the intuition of \citet{AutomatedColorGrading}. 

\subsubsection{Improving MSE by stratification}

We begin by formalising stratification and establishing its dominance over i.i.d.\ estimation in terms of MSE.
\begin{definition}[Stratified estimation]\label{def:stratification}
	Let $(\mathcal{X}, \mathcal{A})$ be a~measurable space, $X$ a~random variable with probability distribution $\mathrm{Law}(X)$ taking values in $\mathcal{X}$, and $f \colon \mathcal{X} \to \R{}$ an~integrable function with $\targetVal \coloneqq \E [f(X)]$. Assume $\{ A_k \}_{k=1}^{K} \subseteq \mathcal{A}$ is a~finite disjoint partition of $\mathcal{X}$. An~estimator of $\targetVal$
	\begin{equation*}
		\estim{\projNumber}
		=
		\frac{1}{\projNumber}
		\sum_{i=1}^{\projNumber}
			f(X_i)
		\, ,
	\end{equation*}
	will be called \emph{stratified} if  $\mathrm{Law}(X_i) = \mathrm{Law}(X)$, $\forall i$, and all bivariate marginals $\mathrm{Law}((X_i, X_j))$, $i \neq j$, satisfy:
	\begin{enumerate}[label=(\roman*),topsep=-1ex,itemsep=-1ex]
		\item $X_i$ and $X_j$ are conditionally independent given the~$\sigma$-algebra generated by $\{ A_k \}_{k=1}^{K}$;
		\item  $\Prob(X_i \in A_k, X_j \in A_l)$ is less than (resp.\ greater than) or equal to $\Prob(X \in A_k) \Prob(X \in A_l)$ when $k = l$ (resp.\ $k \neq l$), for any $k, l \in [K]$;
		\item $X_i$ and $X_j$ are pairwise exchangeable.
	\end{enumerate}
	The~inequality in (ii) is required to be strict for at least one $(i, j), i \neq j$, in the~$k = l$ case for some $k \in [K]$.
\end{definition}
\begin{restatable}{theorem}{mseStrat}\label{thm:mse_strat_estim}
	The~MSE of any stratified estimator is lower or equal to that of an~i.i.d.\ estimator. A~stratified estimator for which the~inequality is strict exists whenever $\exists \, k, l \in [K]$ such that $\E [f(X) \, \vert \, X \in A_k] \neq \E [f(X) \, \vert \, X \in A_l]$ and $\Prob(X \in A_k) > 0$, $\Prob(X \in A_l) > 0$.
\end{restatable}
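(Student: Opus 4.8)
The plan is to reduce the statement to a claim about pairwise cross-covariances of the stratified sample, and then to recognise the relevant $K \times K$ matrix as the negative of a weighted graph Laplacian, which makes the sign manifest. Since $\mathrm{Law}(X_i) = \mathrm{Law}(X)$ for every $i$, both $\estim{\projNumber}$ and $\estimIID{\projNumber}$ are unbiased estimators of $\targetVal$, so in both cases the MSE coincides with the variance. Expanding,
\[
  \mathrm{Var}(\estim{\projNumber})
  = \tfrac{1}{\projNumber}\,\mathrm{Var}(f(X))
    + \tfrac{1}{\projNumber^2}\sum_{i\ne j}\mathrm{Cov}(f(X_i), f(X_j)) ,
  \qquad
  \mathrm{Var}(\estimIID{\projNumber}) = \tfrac{1}{\projNumber}\,\mathrm{Var}(f(X)) ,
\]
so it suffices to show (a) every pairwise covariance $\mathrm{Cov}(f(X_i), f(X_j))$, $i \ne j$, is non-positive, and (b) under the stated hypothesis there is a stratified estimator ($\projNumber \ge 2$) for which at least one such covariance is strictly negative.

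For (a), fix $i \ne j$, write $Z_i$ for the index $k$ with $X_i \in A_k$, set $h(k) \coloneqq \E[f(X)\,\vert\,X\in A_k]$ (dropping indices with $\Prob(X\in A_k)=0$, which are irrelevant), and note $\E[f(X_i)\,\vert\,Z_i=k] = h(k)$ because $\mathrm{Law}(X_i)=\mathrm{Law}(X)$. Property (i), that $X_i$ and $X_j$ are independent given the stratum $\sigma$-algebra, gives $\E[f(X_i)f(X_j)] = \E\bigl[\E[f(X_i)\,\vert\,Z_i,Z_j]\,\E[f(X_j)\,\vert\,Z_i,Z_j]\bigr] = \E[h(Z_i)h(Z_j)]$, so
\[
  \mathrm{Cov}(f(X_i), f(X_j)) = \mathrm{Cov}(h(Z_i), h(Z_j)) = \sum_{k,l} h(k)h(l)\, r_{kl} ,
  \qquad r_{kl} \coloneqq \Prob(Z_i=k, Z_j=l) - \Prob(X\in A_k)\Prob(X\in A_l).
\]
The matrix $R = (r_{kl})$ is symmetric by (iii), and has vanishing row and column sums since the stratum marginals of $X_i$ and $X_j$ match those of $X$; property (ii) gives $r_{kk}\le 0$ and $r_{kl}\ge 0$ for $k\ne l$. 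Hence $-R$ is a weighted graph Laplacian with non-negative edge weights $r_{kl}$, and the Laplacian quadratic-form identity yields $\mathrm{Cov}(f(X_i),f(X_j)) = -\tfrac{1}{2}\sum_{k\ne l} r_{kl}\,(h(k)-h(l))^2 \le 0$, which proves (a) and hence the first assertion.

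For (b), suppose $h(k_0)\ne h(l_0)$ (so $k_0\ne l_0$) with $\Prob(X\in A_{k_0}),\Prob(X\in A_{l_0})>0$, and fix $\projNumber\ge 2$. I would build the estimator by first drawing stratum indices $(Z_1,Z_2)$ from the distribution obtained from the product law $\Prob(X\in A_k)\Prob(X\in A_l)$ by moving a mass $\varepsilon>0$ (small enough to keep all entries non-negative) off the diagonal cells $(k_0,k_0),(l_0,l_0)$ and onto $(k_0,l_0),(l_0,k_0)$; then drawing $Z_3,\dots,Z_{\projNumber}$ i.i.d.\ from the stratum marginal of $X$, independently; and finally, conditionally on all strata, drawing the $X_i$ independently with $X_i$ distributed as $X\,\vert\,X\in A_{Z_i}$. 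One checks directly that this satisfies \Cref{def:stratification}, the strict inequality in (ii) occurring at $k=l=k_0$ for the pair $(1,2)$; the identity above then gives $\mathrm{Cov}(f(X_1),f(X_2)) = -\varepsilon\,(h(k_0)-h(l_0))^2 < 0$ while all other pairwise covariances vanish, so its MSE is strictly below that of $\estimIID{\projNumber}$.

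The only delicate point is the measure-theoretic reading of property (i): one must verify that conditioning on the stratum $\sigma$-algebra genuinely replaces each $X_i$ by a draw from the within-stratum conditional of $X$ (this is what the marginal assumption together with (i) is there to ensure), after which everything is the covariance bookkeeping above. I expect the elementary Laplacian inequality — equivalently, the observation that the "more mass on the diagonal, less off the diagonal" structure of $R$ forces $h^\top R h \le 0$ — to be the conceptual crux, since it is what turns conditions (ii)–(iii) into a sign on the cross-covariance.
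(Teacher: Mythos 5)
Your proposal is correct and follows essentially the same route as the paper's proof: the same unbiasedness/variance decomposition reducing everything to the sign of the cross-terms, the same reduction via conditional independence to the matrix $\mathbf{P}^{(i,j)} - \mathbf{p}\mathbf{p}^\top$ acting on the vector of stratum means, and the same $\varepsilon$-perturbation of the product law on the two strata (with the remaining samples i.i.d.) for the strict-improvement part. The only cosmetic difference is that you establish negative semi-definiteness through the explicit graph-Laplacian quadratic-form identity, whereas the paper invokes diagonal dominance of the same sign-patterned symmetric matrix; these are equivalent observations.
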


Stratification is a well-established means of achieving variance reduction in Monte Carlo (see e.g.\ \citet{MCbook}). However, it is a particularly appealing approach in the context of sliced Wasserstein and projected Wasserstein estimation, as there is a natural partition of the space $S^{d-1}$ to consider. Bringing~\Cref{def:stratification} into the context of sliced and projected Wasserstein estimation, we take 
$\mathcal{X} = S^{\ortd - 1}$, $X_n = \mathbf{v}_n$, $\mathrm{Law} (X) = \mathrm{Unif}(S^{d-1})$, and $f$ to be the~function computed in the~inner loop of \Cref{alg:sw,alg:pw} respectively. 

Revisiting~\Cref{def:couplings}, it is natural to consider partitioning $S^{\ortd - 1}$ into sets $\{E_\tau\}_{\tau \in \mathcal{S}_\suppNumber}$, where $E_\tau \coloneqq \{\mathbf{v} \in S^{\ortd - 1} \, \vert \, \tau \in \Sigma_{\mathbf{v}} \}$ with $\Sigma_{\mathbf{v}}$ denoting the set of optimal matchings for direction $v$, and $\mathcal{S}_\suppNumber$ is the~set of all permutations of $[\suppNumber]$. These sets need not be disjoint which we amend using the~following observation: multiple couplings are optimal iff either (a) $\mathbf{x}_i = \mathbf{x}_j$ or $\mathbf{y}_i = \mathbf{y}_j$, for some $i \neq j$; or (b) $\langle \mathbf{v} , \distOnePoint_i \rangle = \langle \mathbf{v} , \distOnePoint_j \rangle$ or $\langle \mathbf{v} , \distTwoPoint_i \rangle = \langle \mathbf{v} , \distTwoPoint_j \rangle$, for some $i \neq j$. In (a), we are free to deterministically pick any of the~optimal couplings as the~contribution to both the~sliced and projected Wasserstein integrals will be the~same under any of them. The~events in (b) are then null sets and we can thus again safely pick any of the~available couplings. 

Stratification with the~modified $\{E_\tau\}_{\tau \in \mathcal{S}_\suppNumber}$ partition can therefore be applied to estimation of projected and sliced Wasserstein distances and by~\Cref{thm:mse_strat_estim} will lead to improved MSE in all but degenerate cases.

\subsubsection{Orthogonal coupling of directions as approximate stratification}

The~last section presented a~sampling scheme which renders i.i.d.\ sampling \textit{inadmissible} in terms of MSE. However, the~proposed stratification approach crucially relies on knowledge of the~$E_\sigma$ regions and our ability to sample uniformly from these. 

\begin{remark}\label{rem:half_spaces}
	Each region $E_\sigma$, can be written as a~finite union of simply-connected sets. 
	
	Specifically, a~coupling $\sigma \in \mathcal{S}_\suppNumber$ is optimal for a~given $\mathbf{v}$ iff it corresponds to the~coupling implied by associating the~projected points according to their order. The~region where a~particular fixed ordering $\tau \in \mathcal{S}_\suppNumber$ of $\{ \langle \mathbf{v} , \mathbf{x}_i \rangle \}_{i=1}^\suppNumber$ is achieved can be obtained as follows:
	\begin{enumerate}[label=(\roman*),topsep=-1ex,itemsep=-1ex]
		\item for $i = 1, \ldots , \suppNumber - 1$, define the~half-spaces
		\begin{align*}
			H_{\tau(i), \tau(i+1)}^{\mathbf{x}}  \!
			&\coloneqq \!
			\{ \mathbf{v} \in S^{\ortd - 1} \vert \, \langle \mathbf{v} , \mathbf{x}_{\tau(i)} - \mathbf{x}_{\tau(i + 1)} \rangle \! \leq 0 \}
			\, ;
		\end{align*}
		\item obtain the~region $B_\tau^{\mathbf{x}} = \bigcap_{i = 1}^{\suppNumber - 1} H_{\tau(i), \tau(i+1)}^{\mathbf{x}}$.
	\end{enumerate}
	Defining $B_\tau^\mathbf{y}$ analogously and using the~definition $E_\sigma = \{\mathbf{v} \in S^{\ortd - 1} \, \vert \, \sigma \in \Sigma_{\mathbf{v}} \}$, we can write $E_\sigma$ as a~finite union of intersections of half-spaces:
	\begin{equation*}
		E_\sigma = \bigcup_{\tau \in \mathcal{S}_\suppNumber} (
			B_{\tau}^{\mathbf{x}} \cap B_{\tau \circ \sigma}^{\mathbf{y}}
		)
		\, ,
	\end{equation*}
	where $\tau \circ \sigma$ denotes composition of the~two mappings.
\end{remark}

By \Cref{rem:half_spaces}, the~structure of $E_\sigma$ quickly grows in complexity as $\suppNumber$ increases. Practical implementation of the~algorithm is thus computationally intractable for all but very small problems.

However, we might view the~orthogonal coupling of the~$\{\mathbf{v}_n\}_{n=1}^\projNumber$ directions as an~approximation to stratification, since: (a) the~directions are pairwise exchangeable and marginally $\mathrm{Unif}(S^{d-1})$, and (b) the~orthogonal coupling of the~directions should intuitively decrease the~chance of sampling the~same induced coupling because the~individual $E_\sigma$ are finite unions of simply-connected sets (cf.\ \Cref{rem:half_spaces}). However, even if we assumed that Condition~(ii) from \Cref{def:stratification} holds, Condition~(i) will only be satisfied in the~case of the~projected Wasserstein distance for which $f(\mathbf{v})$ is piecewise constant (cf.\ \Cref{def:stratification}).

The~following result for the~simplified case $\suppNumber = 2, \ortd = 2$ supports the~above intuition, showing that orthogonal coupling improves MSE in the~projected but not necessarily in the~sliced Wasserstein case.
\begin{restatable}{proposition}{mseSWPW}\label{prop:mse_sw_pw_2x2}
	Let $\suppNumber = 2$ and $\ortd = 2$. Then~orthogonally coupled estimator of projected Wasserstein distance satisfies \Cref{def:stratification}. For the~sliced Wasserstein distance, neither i.i.d.\ nor orthogonal estimation dominates the~other in terms of MSE.
\end{restatable}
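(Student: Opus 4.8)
The plan is to treat the projected and sliced cases separately, reducing each to a one-parameter computation on $S^1$. Write $\mathbf v = (\cos\theta,\sin\theta)$ with $\theta\sim\mathrm{Unif}[0,2\pi)$, put $\mathbf a=\mathbf x_1-\mathbf x_2$, $\mathbf b=\mathbf y_1-\mathbf y_2$, and note that for $\ortd=2$ a draw $(\mathbf v_1,\mathbf v_2)\sim\mathrm{UnifOrt}(S^1;2)$ consists of two rows of a Haar-random $2\times2$ orthogonal matrix, i.e.\ $\mathbf v_1\sim\mathrm{Unif}(S^1)$ and $\mathbf v_2=\pm R_{\pi/2}\mathbf v_1$ with an independent uniform sign. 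Since antipodal directions induce the same coupling and the same sliced contribution, the sign is immaterial throughout, and I take $\mathbf v_2=R_{\pi/2}\mathbf v_1$.

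\emph{Projected case.} With $\suppNumber=2$ the only couplings are $\mathrm{id}$ and the transposition $\tau$, so $f_{\PW}$ takes only the two values $c_{\mathrm{id}}=\tfrac12(\|\mathbf x_1-\mathbf y_1\|_2^p+\|\mathbf x_2-\mathbf y_2\|_2^p)$ and $c_{\tau}=\tfrac12(\|\mathbf x_1-\mathbf y_2\|_2^p+\|\mathbf x_2-\mathbf y_1\|_2^p)$, constant on the two cells $E_{\mathrm{id}},E_{\tau}$. (If $\mathbf x_1=\mathbf x_2$, $\mathbf y_1=\mathbf y_2$, or $\mathbf a\parallel\mathbf b$, then $c_{\mathrm{id}}=c_\tau$ or one cell is null, so $f_{\PW}$ is a.s.\ constant and the claim is trivial; assume otherwise, so the angle $\phi$ between $\mathbf a$ and $\mathbf b$ lies in $(0,\pi)$.) Because $f_{\PW}$ is constant on each cell, the orthogonal estimator depends on $(\mathbf v_1,\mathbf v_2)$ only through the pair of cell labels $(C_1,C_2)$, so one may verify \Cref{def:stratification} with $X_i:=C_i$, $f$ the two-valued map, and the partition into the two singletons; condition (i) then holds automatically, and condition (iii) holds since permuting the rows of a Haar-random orthogonal matrix preserves its law, so $(C_1,C_2)$ is exchangeable. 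Using the marginals $\Prob(\mathbf v_i\in E_{\mathrm{id}})=q$, $\Prob(\mathbf v_i\in E_\tau)=1-q$ together with exchangeability, all four inequalities of condition (ii) reduce to the single statement
\[
	\Prob(\mathbf v_1\in E_{\mathrm{id}},\ \mathbf v_2\in E_\tau)\ \geq\ q(1-q),
\]
strict for one iff strict for all. Finally, the half-space description of \Cref{rem:half_spaces} shows $E_\tau$ is a union of two antipodal arcs of total length $2\phi$, so $1-q=\phi/\pi$, while rotating $E_\tau$ by $-\pi/2$ and intersecting with $E_{\mathrm{id}}$ gives $\Prob(\mathbf v_1\in E_{\mathrm{id}},\mathbf v_2\in E_\tau)=\tfrac1\pi\min(\phi,\pi-\phi)$; since $\pi\min(\phi,\pi-\phi)\geq\phi(\pi-\phi)$ with equality only at $\phi\in\{0,\pi\}$, the inequality is strict, proving the projected claim.

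\emph{Sliced case.} Both estimators are unbiased, so I compare variances; for $\projNumber=2$ one has $\mathrm{Var}(\estimOrt{2})=\tfrac12 V+\tfrac12\mathrm{Cov}(f_{\SW}(\mathbf v_1),f_{\SW}(\mathbf v_2))$ with $V=\mathrm{Var}_{\mathrm{Unif}(S^1)}(f_{\SW})$, whereas $\mathrm{Var}(\estimIID{2})=\tfrac12 V$, so orthogonal improves upon (ties, loses to) i.i.d.\ exactly when the cross-covariance is negative (zero, positive); it suffices to exhibit one configuration of each sign. If the four points are collinear on $\mathbb{R}\mathbf e$ with a nontrivial one-dimensional configuration, the contribution factorises as $f_{\SW}(\mathbf v)=\kappa\,|\langle\mathbf v,\mathbf e\rangle|^p$ for a positive constant $\kappa$ (the one-dimensional $p$-Wasserstein cost of the configuration projected onto $\mathbf e$), so the cross-covariance equals $\kappa^2\mathrm{Cov}(|\cos\theta|^p,|\sin\theta|^p)$, which is strictly negative: the law of $(|\cos\theta|,|\sin\theta|)$ reduces by symmetry to $\theta\sim\mathrm{Unif}[0,\pi/2]$, on which the two factors are strictly decreasing and strictly increasing respectively, so Chebyshev's integral inequality applies strictly — hence orthogonal strictly improves MSE here. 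Conversely, take $\mathbf x_1=(1,0)$, $\mathbf x_2=(-1,0)$, $\mathbf y_1=(0,1)$, $\mathbf y_2=(0,-1)$: a direct computation gives $f_{\SW}(\mathbf v)=\bigl||\cos\theta|-|\sin\theta|\bigr|^p$, which is invariant under $\theta\mapsto\theta+\pi/2$, so $f_{\SW}(\mathbf v_2)=f_{\SW}(\mathbf v_1)$ almost surely, the cross-covariance equals $V>0$, and the orthogonal estimator has exactly twice the MSE of the i.i.d.\ one. These two configurations show neither estimator dominates.

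The main obstacle is the geometric bookkeeping in the projected case — verifying that $E_{\mathrm{id}}$ and $E_\tau$ are unions of antipodal arcs and correctly evaluating $\Prob(E_{\mathrm{id}}\cap(E_\tau-\tfrac\pi2))$, including the case split at $\phi=\pi/2$ — together with making precise the reduction ``$f_{\PW}$ piecewise constant $\Rightarrow$ it suffices to check the induced discrete stratification problem'', which is exactly the step that makes condition (i) of \Cref{def:stratification} hold for the projected but not the sliced Wasserstein distance.
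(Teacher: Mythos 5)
Your proposal is correct. The projected-Wasserstein half is essentially the paper's own argument: the same angular parametrisation of $S^1$, the same observation that $\pm\mathbf{v}$ induce identical couplings so that orthogonality amounts to a rotation by $\pi/2$, the same cell probabilities (your $\tfrac{1}{\pi}\min(\phi,\pi-\phi)$ is the paper's $\tfrac12-|\tfrac12-p|$ with $p=1-\phi/\pi$), and the same use of piecewise constancy of $f$ to rescue condition (i) of \Cref{def:stratification} — your relabelling of $X_i$ as the cell label is just a cleaner formalisation of the paper's remark that constancy on cells is what makes all conditions hold; both treatments share the minor caveat that in degenerate configurations ($\mathbf{x}_1=\mathbf{x}_2$, $\mathbf{y}_1=\mathbf{y}_2$, or parallel difference vectors) the strictness clause of \Cref{def:stratification} cannot hold and the statement must be read as excluding these. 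The sliced-Wasserstein half, however, is genuinely different from the paper. The paper reduces both MSEs to explicit trigonometric integrals of the form $\int|\cos(\phi-\rho)|^p|\cos(\phi-\gamma)|^p\,\mathrm{d}\phi$ over the cells and then exhibits two concrete datasets with numerically computed MSEs for $p=1$. You instead produce two structural examples: a collinear configuration, where $f_{\SW}(\mathbf{v})=\kappa|\langle\mathbf{v},\mathbf{e}\rangle|^p$ and Chebyshev's integral inequality for oppositely monotone functions gives a strictly negative cross-covariance (orthogonal wins), and the symmetric cross configuration, where $f_{\SW}$ is invariant under rotation by $\pi/2$, so the orthogonal estimator duplicates its sample and has exactly twice the i.i.d.\ variance (i.i.d.\ wins). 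Both routes are valid; yours is more elementary, verifiable by hand, and establishes the non-domination simultaneously for every $p\geq 1$, whereas the paper's computation yields quantitative MSE values (and the data for its Figure) at the cost of relying on numerically evaluated integrals for a specific $p$.
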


\begin{figure}[ht]
	\centering
	\setbox1=\hbox{\includegraphics[keepaspectratio,width=.6\columnwidth]{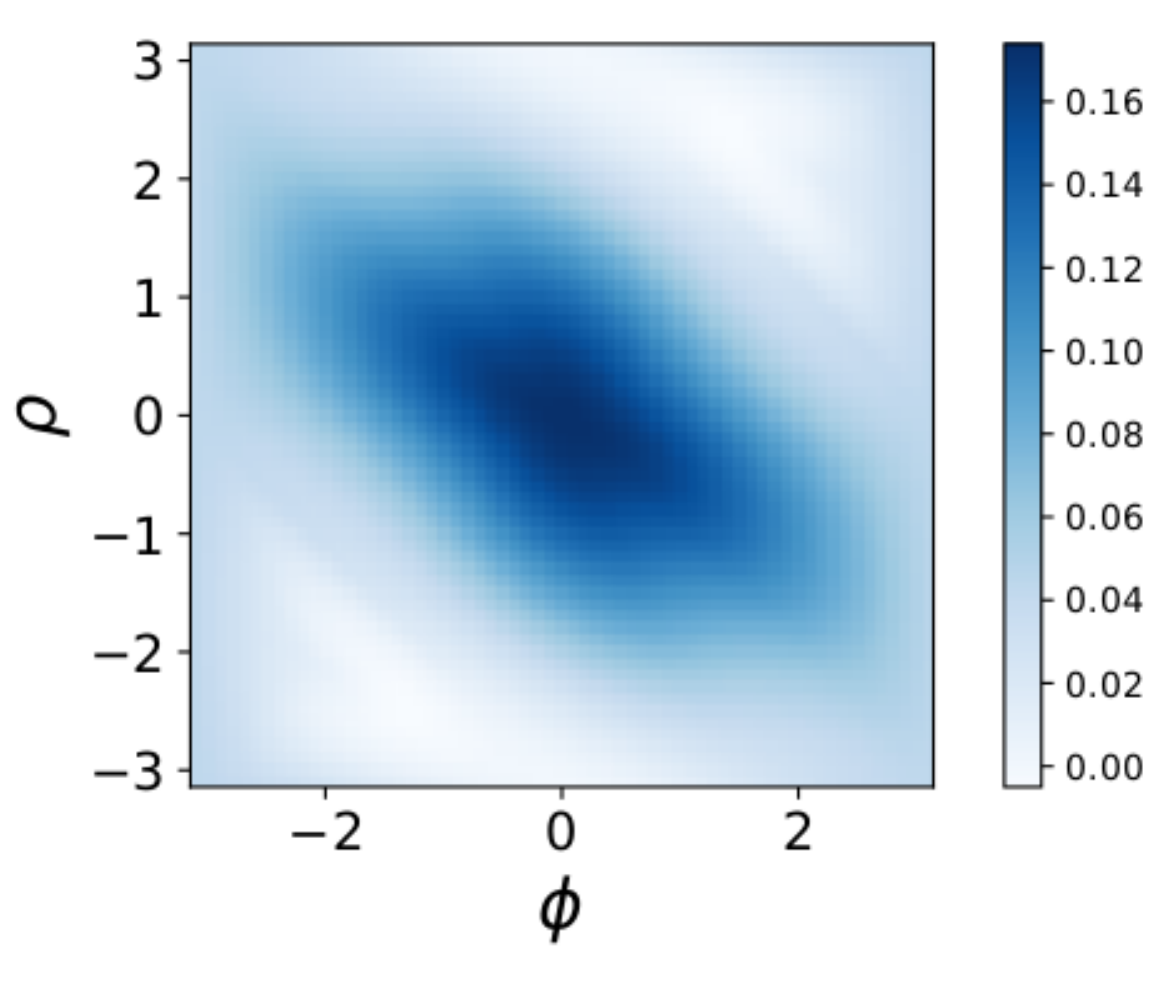}}
	\setbox2=\hbox{\includegraphics[keepaspectratio,width=.4\columnwidth]{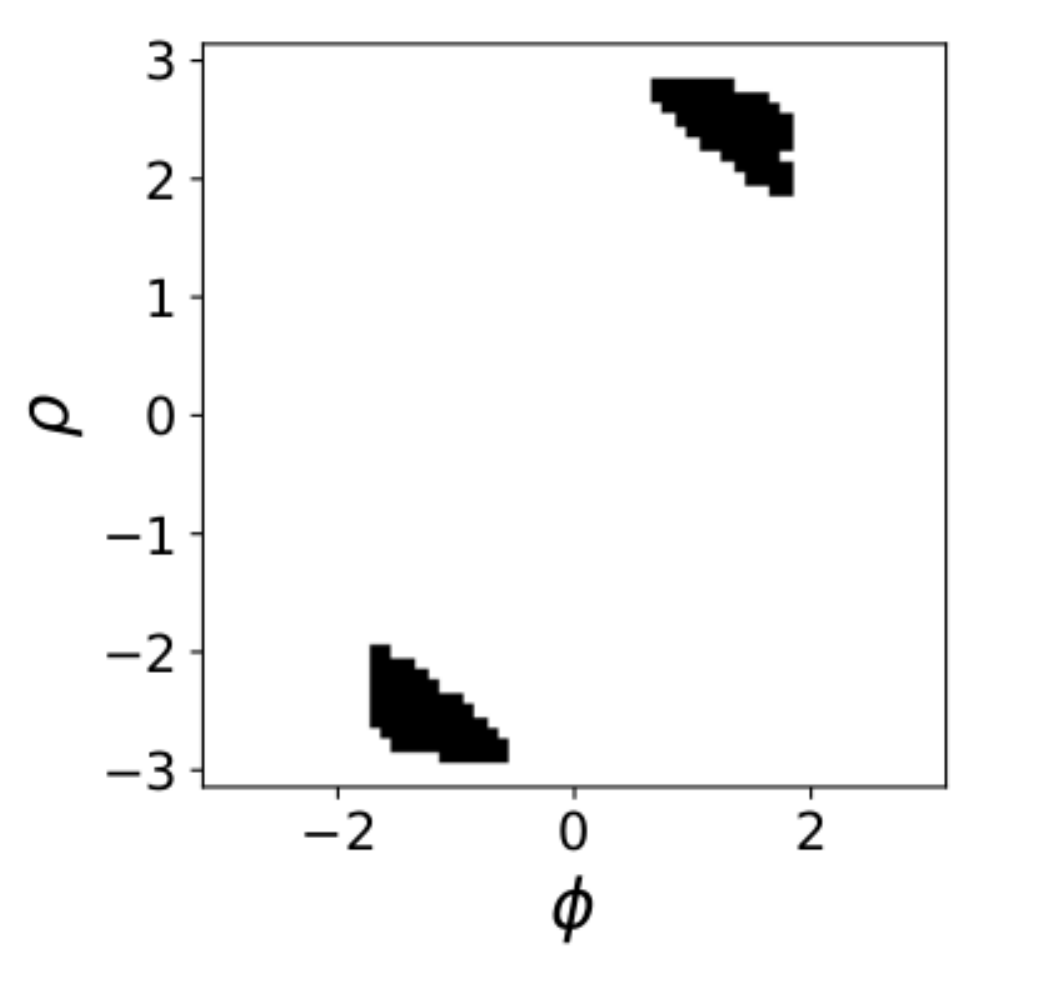}}
	\null\hfill
	\subfloat{%
		\raisebox{0.5\ht2-0.5\ht1}{\includegraphics[keepaspectratio,width=.55\columnwidth]{img/mse_2x2}}
	}
	\hfill
	\subfloat{%
		\includegraphics[keepaspectratio,width=.38\columnwidth]{img/mse_2x2_bin}} \hfill
	\hfill\null
	\caption{Difference between MSE of the~orthogonally coupled and i.i.d.\ estimator of $1$-sliced Wasserstein distance with $\projNumber = 2$, for various datasets in the~$\suppNumber = 2, \ortd = 2$ scenario.
	Each dataset is represented by a~single point.
	For all, $\mathbf{x}_1 = [0, 0]^\top$, $\mathbf{x}_2 = [1, 0]^\top$ is fixed, and $\mathbf{z} = \mathbf{x}_1 - \mathbf{y}_1$ and $\mathbf{r} = \mathbf{y}_1 - \mathbf{y}_2$ have unit norm. The~varying factors are the~angles between the~x-axis and the~vectors $\mathbf{r}$ and $\mathbf{z}$, which we respectively denote by $\phi$ and $\rho$.
	The~numerical difference between MSEs for each dataset is on the~left (the~higher the~better is orthogonal). The~two connected black regions on the~right highlight the~configurations of $\phi$ and $\rho$ for which i.i.d.\ is better than orthogonal.}
\end{figure}

\section{EXPERIMENTS}\label{sec:experiments}


We now present empirical evaluation of the theory introduced in the previous sections.

\subsection{Distance estimation}
We begin with a testbed of small-scale problems, which will aid intuition as to the advantages of orthogonal estimation. The problems we consider consist of computing
\begin{align}\label{eq:problems}
	\SW_p\left(\frac{1}{\suppNumber} \sum_{\suppIx=1}^\suppNumber \delta_{\mathbf{x}_i} , \frac{1}{\suppNumber} \sum_{\suppIx=1}^\suppNumber \delta_{\mathbf{y}_i} \right) \, .
\end{align}
To generate a collection of problems, we sample $(\mathbf{x}_\suppIx)_{\suppIx=1}^\suppNumber$, $(\mathbf{y}_\suppIx)_{\suppIx=1}^\suppNumber$ independently from distributions $N(\mu_\mathbf{x}, I)$, $N(\mu_\mathbf{y}, I)$. In Figure \ref{fig:lowdim}, we plot comparisons of the MSE achieved by estimators using i.i.d. and orthogonally-coupled projection directions for a variety of values of the parameters $d$ and $N$, in the case $p=1$, $\mu_\mathbf{x} = \mu_\mathbf{y}$, and use a number of projection direction $\projNumber$ equal to the dimensionality $d$. Each pair of sampled distributions in Expression \eqref{eq:problems} corresponds to a scatter point in the relevant graph in Figure \ref{fig:lowdim}. The results for the case $d=2$ illustrate our theoretical results that orthogonality does not always guarantee improved MSE, although in the vast majority of cases, there is indeed an improvement. The case of $d=50$ illustrates that as dimensionality increases, the improved performance of orthogonally-coupled projection directions becomes more robust.

\begin{figure}\centering
	\null\hfill
	\includegraphics[keepaspectratio, width=.49\columnwidth]{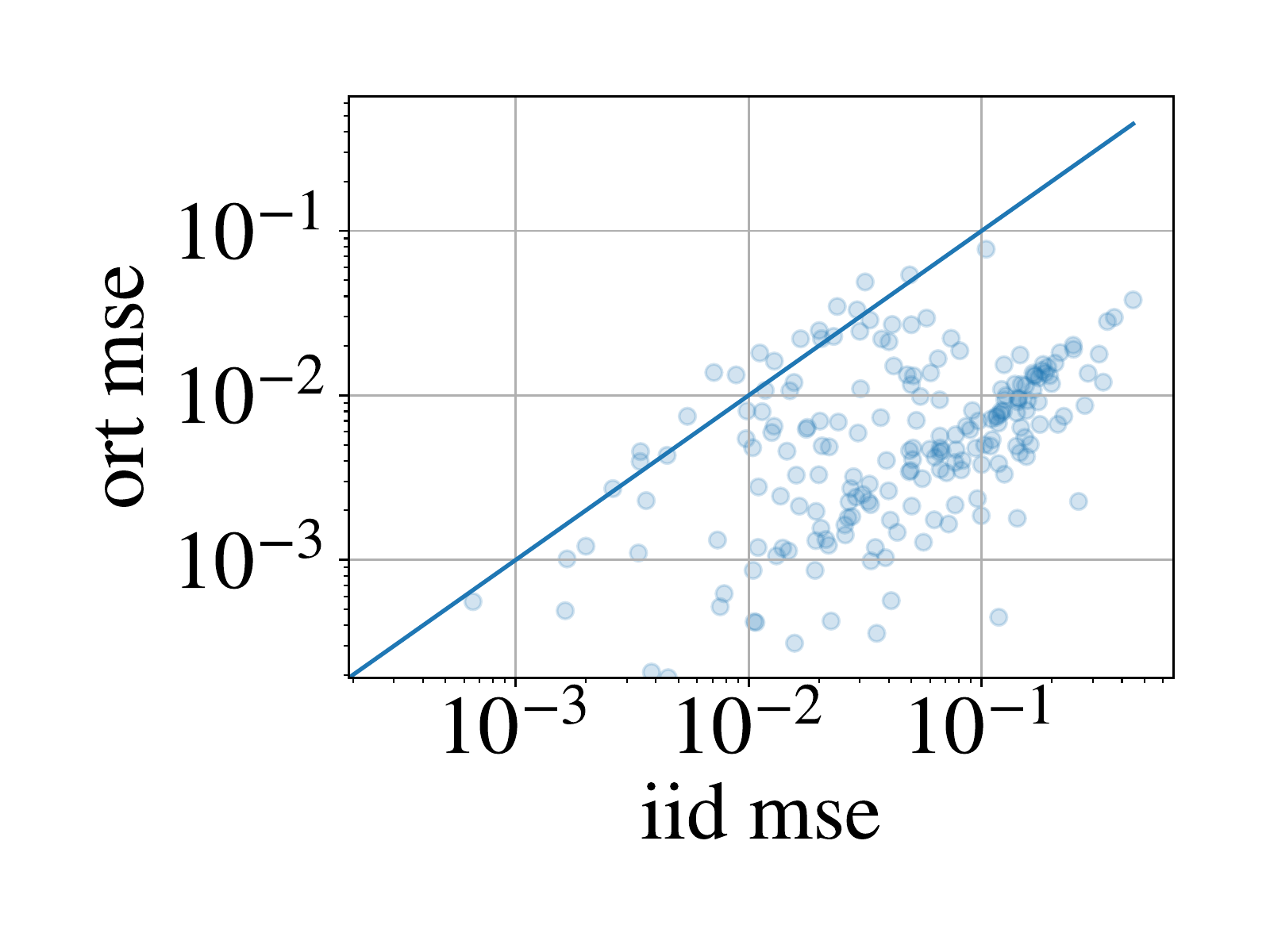}\hfill	\includegraphics[keepaspectratio, width=.49\columnwidth]{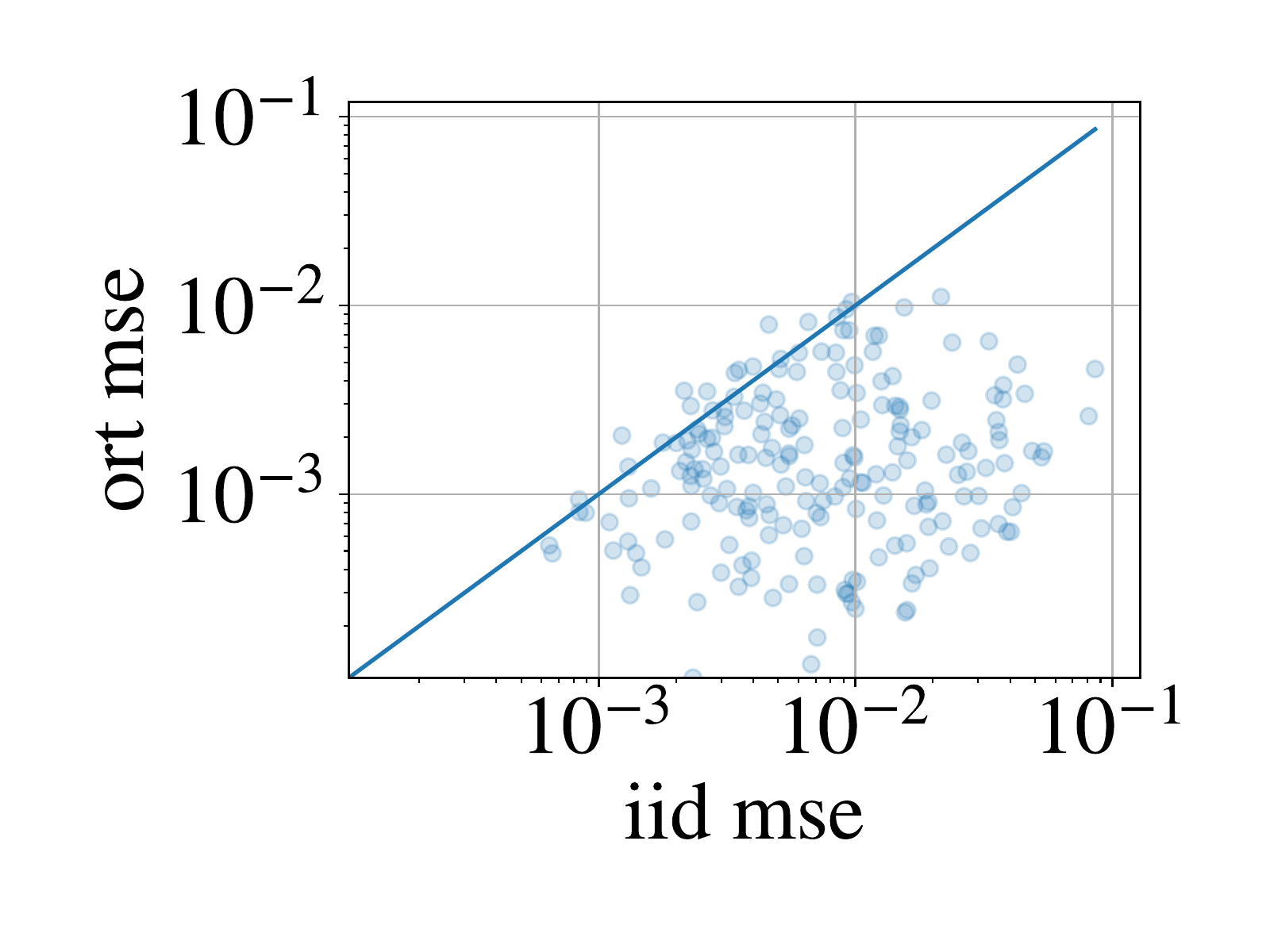}\hfill\null
	
	\vspace{-0.35cm}
	
	\null\hfill
	\includegraphics[keepaspectratio, width=.49\columnwidth]{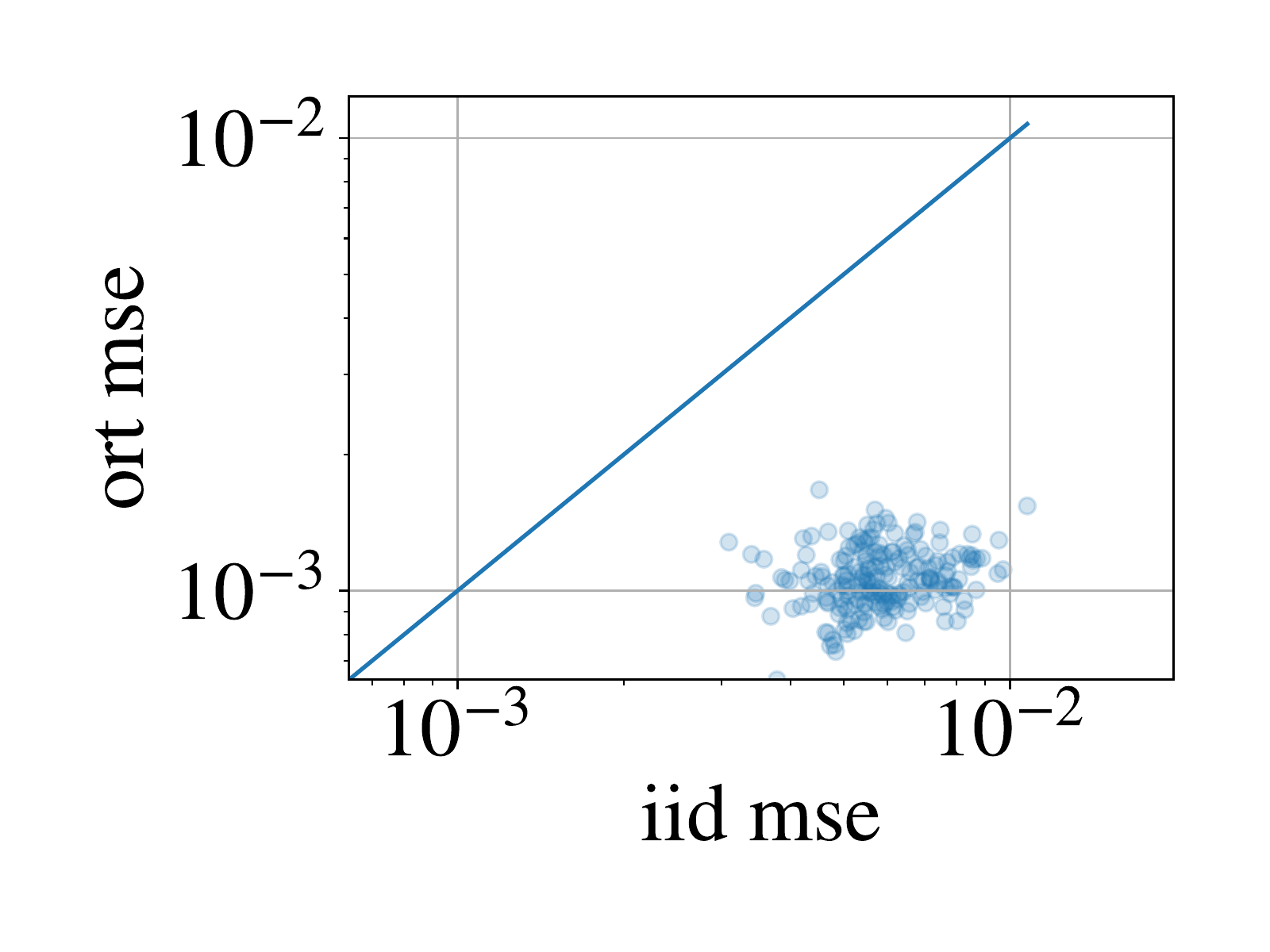}\hfill
	\includegraphics[keepaspectratio, width=.49\columnwidth]{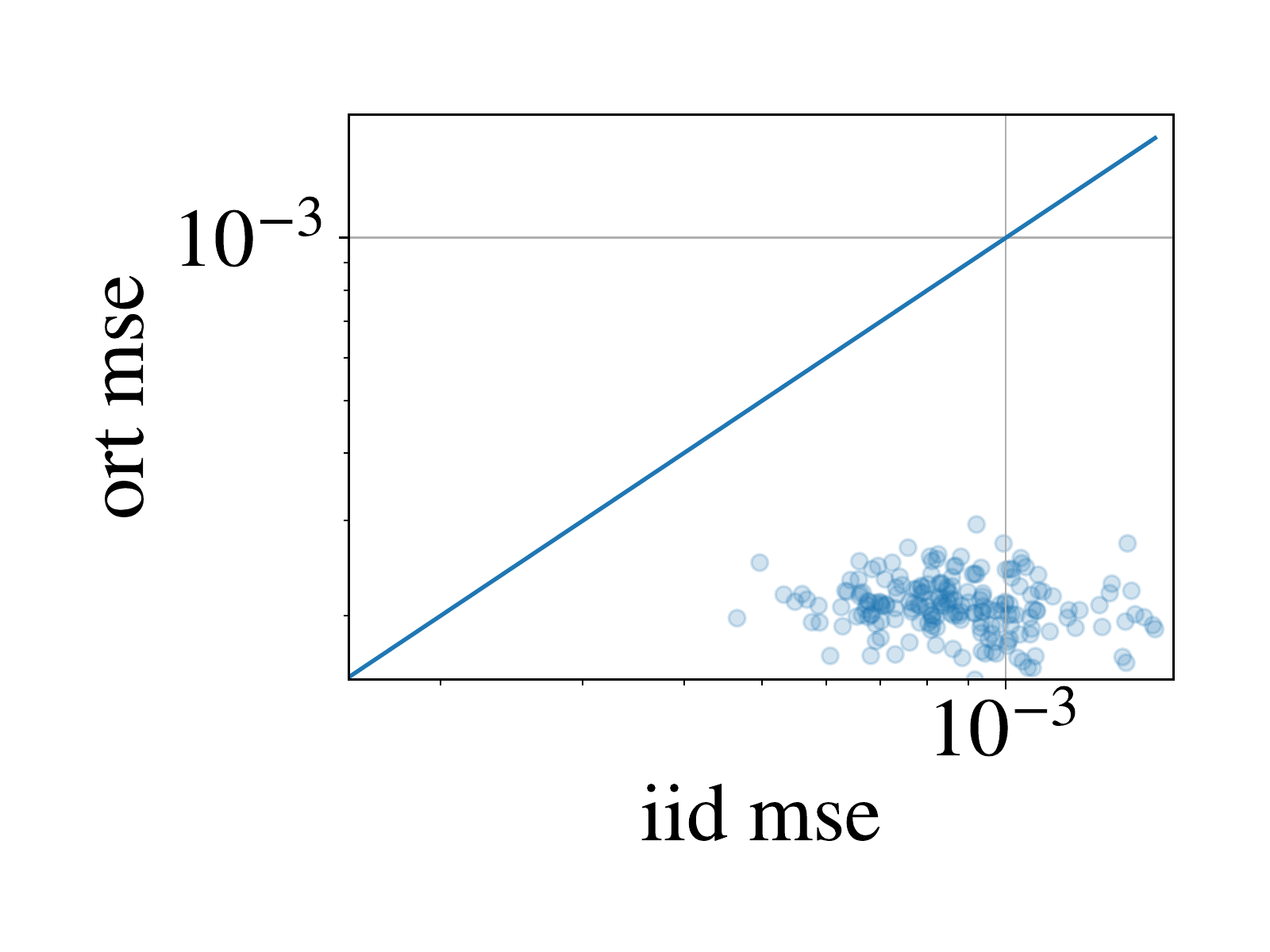}\hfill\null
	
	
	\caption{Scatter plots of orthogonal estimator MSE vs. i.i.d. estimator MSE for $d=2$ (left column) and $d=50$ (right column), and $\suppNumber=2$ (top row) and $\suppNumber=10$ (bottom row).}
	\label{fig:lowdim}
\end{figure}

\subsection{Generative modelling}\label{sec:genmod}
We study the effect of orthogonal estimation in the context of generative modeling. In particular, we study the Sliced Wasserstein Auto-Encoders \cite{swae} on MNIST dataset. The auto-encoder consists of an encoder $f_\theta(\cdot)$ with parameter $\theta$ and a decoder $g_\phi(\cdot)$ with parameter $\phi$. For an given observation $\mathbf{x} \in \mathbb{R}^n$, the encoder computes an hidden code $\mathbf{z} = f_\theta(\mathbf{x}) \in \mathbb{R}^h$. With an hidden code $\mathbf{z}$, the decoder computes a generated sample $\mathbf{\tilde{x}} = g_\phi(\mathbf{z})$. Given a distribution of $m$ observations $\{\mathbf{x}_i\}_{i=1}^m$, let $P(X) = \frac{1}{m} \sum_{i=1}^m \delta_{\mathbf{x}_i}$ be the empirical distribution, we hope to jointly train an encoder $f_\theta(\cdot)$ that uncovers the hidden structure of $P(X)$ and a decoder $g_\phi(\cdot)$ that generates samples similar to $P(X)$. Let $p_\theta(\mathbf{z})$ be the push-forward distribution from $\mathbf{z} = f_\theta(\mathbf{x}), \mathbf{x} \sim P(X)$ and $p(\mathbf{z})$ be a prior distribution over hidden codes. The loss of the auto-encoder is defined as 
\begin{align}
L_{\text{ae}}(\theta,\phi)\! =\! \mathbb{E}_{\mathbf{x} \sim P(X)}[\|g_\phi(f_\theta(\mathbf{x})) - \mathbf{x}\|]\! + \!\text{SW}_1(p_\theta(\mathbf{z}), p(\mathbf{z})) \nonumber
\end{align}
where the first term is a reconstruction error and the second term is to enforce that the generated hidden codes $p_\theta(\mathbf{z})$ be close to the prior distribution $p(\mathbf{z})$. We then update $\theta,\phi$ by approximating gradient descent $(\theta,\phi) \leftarrow (\theta,\phi) - \alpha \nabla_{(\theta,\phi)}L_\text{ae}(\theta,\phi)$ with learning rate $\alpha$. To estimate $\text{SW}_1(p_\theta(\mathbf{z}), p(\mathbf{z}))$, we compare orthogonal vs. i.i.d. Monte Carlo samples and we expect that the benefits from a more accurate estimate of the sliced Wasserstein distance translates into higher quality gradient updates. Experiment details are in the Appendix.

In Figure \ref{fig:ae}, we present the learning curves of auto-encoders using three methods: i.i.d. Monte Carlo estimate, orthogonal estimate and an approximation to orthogonal estimation using Hadamard-Rademacher (HD) random matrices (see Appendix) to approximate orthogonal estimate. We observe that the effect of orthogonality depends on the hyperparameters in the learning procedure: when the learning rate is large $\alpha = 10^{-4}$ (Figure \ref{fig:ae}, left), both estimates behave similarly; when the learning rate is small $\alpha = 10^{-5}$ (Figure \ref{fig:ae}, right), orthogonal estimate leads to a slightly faster convergence than i.i.d. estimate. When using HD matrices as a proxy to compute orthogonal estimates, we always benefit from the computational benefit at training time.

\begin{figure}\centering
	\null\hfill
	\includegraphics[keepaspectratio, width=.49\columnwidth]{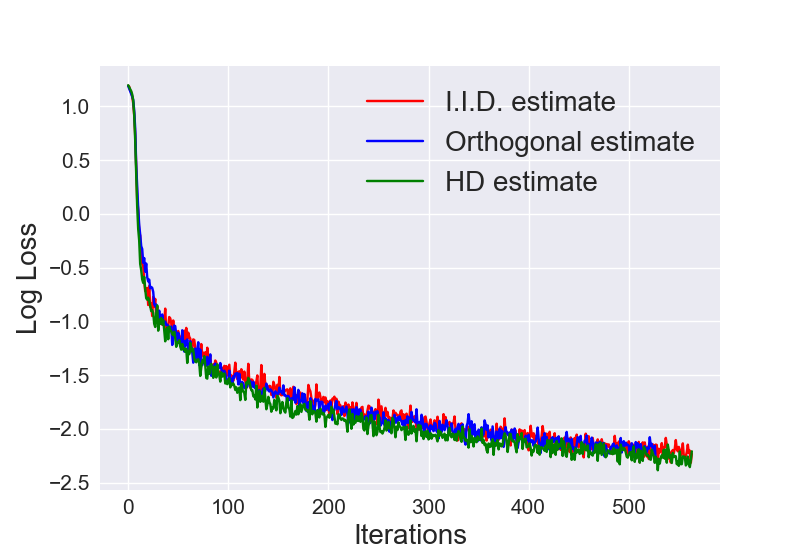}\hfill	\includegraphics[keepaspectratio, width=.49\columnwidth]{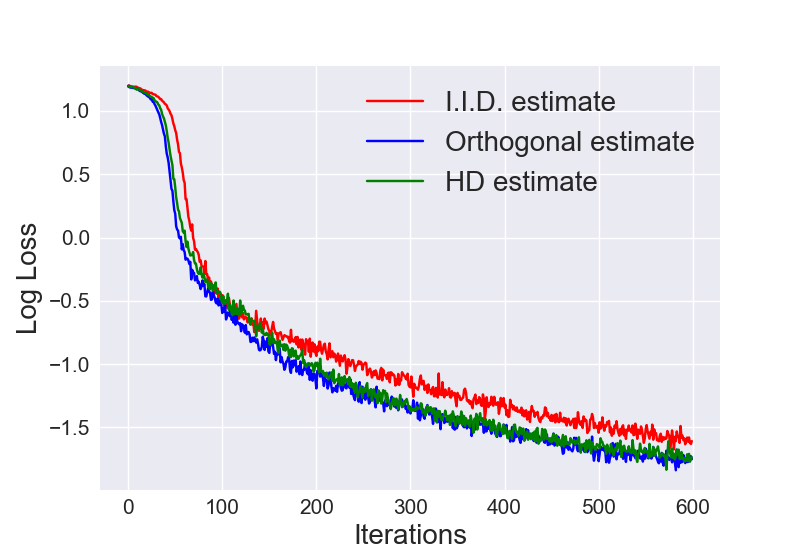}\hfill\null
	\caption{Training curves of Sliced Wasserstein Auto-encoders with three methods to compute Sliced Wasserstein distance: i.i.d. Monte Carlo estimate (red), orthogonal estimate (blue) and HD matrix for orthogonal estimate (green). Vertical axis is the log training loss, horizontal axis is the number of iterations. Left uses a learning rate of $\alpha = 1.0\cdot10^{-4}$ and right uses a learning rate of $\alpha = 1.0 \cdot 10^{-5}$.}
	\label{fig:ae}
\end{figure}

\subsection{Reinforcement learning}
In reinforcement learning (RL), at time $t$ an agent is in state $\mathbf{s}_t$, takes an action $\mathbf{a}_t$, receives an instant reward $r_t$ and transitions to next state $\mathbf{s}_{t+1}$. The objective is to search for a policy $\pi_\theta: \mathbf{s}_t \mapsto \mathbf{a}_t$ parameterized by $\theta$ such that the expected discounted cumulative reward $J(\pi_\theta) = \mathbb{E}_{\pi_\theta}[\sum_{t=0}^\infty \gamma^t r_t]$ for some discount factor $\gamma \in (0,1)$ is maximized. Policy gradient algorithms apply (an approximation of) the gradient update $\theta_{\text{new}} \leftarrow \theta_{\text{old}} + \alpha \nabla_{\theta_{\text{old}}} J(\pi_{\theta_{\text{old}}})$ to iteratively improve the policy. Trust region policy optimization \citep{schulman2015trust} requires that $D(\pi_{\theta_{\text{old}}} || \pi_{\theta_{\text{new}}}) \leq \epsilon$ for some $\epsilon > 0$ to ensure that the updates are stable, where $D(\cdot,\cdot)$ is some discrepancy measure between two policies. Previously, \citet{schulman2015trust} propose to set $D(\cdot,\cdot) = \mathbb{KL}[\cdot || \cdot]$ as the KL divergence, while \citet{zhang2018policy} set $D(\cdot,\cdot) = \text{W}_1(\cdot,\cdot)$ as the $1$-Wasserstein distance. As alternates to the discrepancy measure, we take $D(\cdot,\cdot)$ to be the sliced Wasserstein distance $\text{SW}_1(\cdot,\cdot)$ or projected Wasserstein distance $\text{PW}_1(\cdot,\cdot)$. For fast optimization, instead of constructing an explicit constraint, we adopt a penalty formulation of the trust region \citep{schulman2017proximal,zhang2018policy} and update $\theta_{\text{new}} \leftarrow \theta_{\text{old}} + \alpha \nabla_{\theta_{\text{old}}}(J(\pi_{\theta_{\text{old}}}) - \lambda D(\theta_{\text{old}},\theta_{\text{new}}))$ for some penalty constant $\lambda > 0$. We present all algorithmic and implementation details in the Appendix.

Since projected Wasserstein corrects for the implicit ``bias'' introduced by sliced Wasserstein, we expect the trust region by projected Wasserstein lead to more stable training. In Figure \ref{fig:rl}, we show the training curves on benchmark tasks HalfCheetah (right) and Hopper (left) \citep{brockman2016openai}. We compare the training curves of three schemes: no trust region (red), trust region by sliced Wasserstein distance (blue) and trust region by projected Wasserstein distance (green). In most tasks with simple dynamics as Hopper, we do not see significant difference between three methods; however, in tasks with more complex dynamics such as HalfCheetah, we observe that trust region updates with projected Wasserstein distance leads to slightly more stable updates than the other two baselines, achieving higher cumulative rewards within a fixed number of training steps.

\begin{figure}\centering
	\null\hfill
	\includegraphics[keepaspectratio, width=.49\columnwidth]{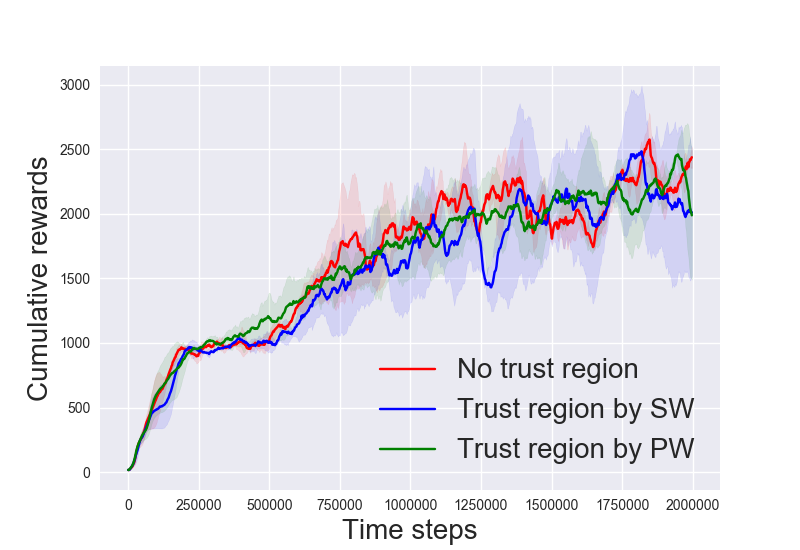}\hfill	\includegraphics[keepaspectratio, width=.49\columnwidth]{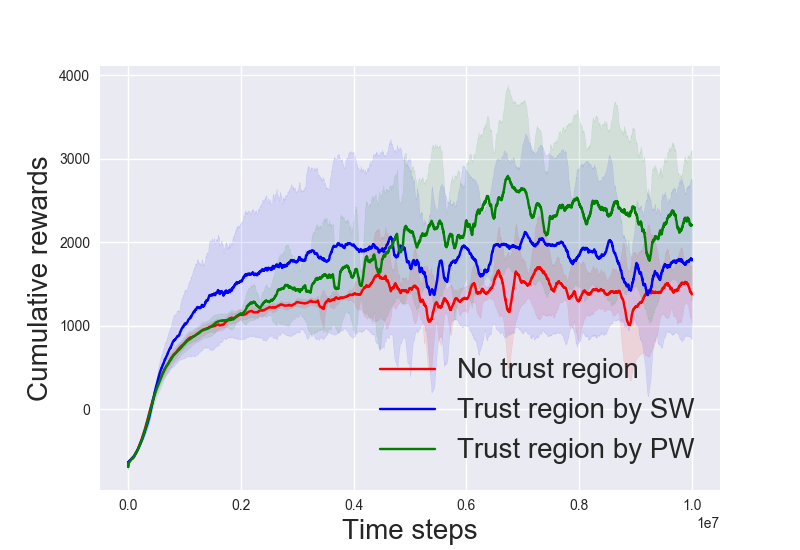}\hfill\null
	\caption{Training curves of RL with three methods to compute policy gradients updates on benchmark tasks (left: Hopper, right: HalfCheetah): no trust region (red), trust region by sliced Wasserstein (blue) and trust region by projected Wasserstein (green). Training curves show the mean $\pm$ std performance across 5 random seeds. Vertical axis is the cumulative reward, horizontal axis is the number of time steps. }
	\label{fig:rl}
\end{figure}


\section{CONCLUSION}\label{sec:conclusion}


We have considered projected Wasserstein distance, a variant of sliced Wasserstein distance, and studied orthogonal couplings of projection directions in estimators of sliced and projected Wasserstein distances. In doing so, we have also given an interpretation of orthogonal coupling as an efficient, approximate means of performing stratified sampling. Our empirical evaluations show that orthogonality can dramatically reduce estimator variance, and these benefits are translated over to downstream tasks such as generative modelling in certain circumstances. 
Important areas for future work include deepening our understanding of the relationship between improvements in estimation of Wasserstein distances themselves and improvements in downstream tasks such as distribution learning, and strengthening our understanding of the effectiveness of orthogonal couplings in Monte Carlo estimators of Wasserstein distances.

\subsubsection*{Acknowledgements}

MR acknowledges support by EPSRC grant
EP/L016516/1 for the Cambridge Centre for Anal-
ysis. JH acknowledges support by a Nokia CASE
Studentship. YHT acknowledges the cloud credits
provided by Amazon Web Services. AW acknowledges
support from the David MacKay Newton research fel-
lowship at Darwin College, The Alan Turing Institute
under EPSRC grant EP/N510129/1 \& TU/B/000074,
and the Leverhulme Trust via the CFI.

\bibliographystyle{apalike}
\bibliography{00_sliced_wasserstein}

\newpage
\onecolumn

\section*{APPENDIX: Orthogonal Estimation of Wasserstein Distances}

\section{Proofs of results in Section \ref{sec:projwasserstein}}


\propMetric*
\begin{proof}
	Symmetry and non-negativity are immediate. We thus turn our attention to proving: (i) $\PW_p(\eta, \mu) = 0$ iff $\eta = \mu$; and (ii) the triangle inequality.
	
	For (i), first let $\distone = \frac{1}{\suppNumber} \sum_{\suppIx = 1}^{\suppNumber} \delta_{\distOnePoint_\suppIx}$ and $\disttwo = \frac{1}{\suppNumber} \sum_{\suppIx = 1}^{\suppNumber} \delta_{\distTwoPoint_\suppIx}$ be distinct. Then for \emph{any} bijective map $\sigma : [\suppNumber] \rightarrow [\suppNumber]$, we have $\sum_{\suppIx =1}^{\suppNumber} \| \distOnePoint_\suppIx - \distTwoPoint_{\sigma(\suppIx)}\|_2^p > 0$, and hence immediately we have $\PW(\distone, \disttwo) > 0$. The converse direction is clear.
	
	For (ii), let $\distone = \frac{1}{\suppNumber} \sum_{\suppIx = 1}^{\suppNumber} \delta_{\distOnePoint_\suppIx}$, $\disttwo = \frac{1}{\suppNumber} \sum_{\suppIx = 1}^{\suppNumber} \delta_{\distTwoPoint_\suppIx}$, and $\distthree = \frac{1}{\suppNumber} \sum_{\suppIx = 1}^{\suppNumber} \delta_{\distThreePoint_\suppIx}$. Fix $\mathbf{v} \in S^{d-1}$, and without loss of generality, assume that the points $(\distOnePoint_\suppIx)_{\suppIx=1}^\suppNumber$, $(\distTwoPoint_\suppIx)_{\suppIx=1}^\suppNumber$, 
	$(\distThreePoint_\suppIx)_{\suppIx=1}^\suppNumber$ are indexed so that
	\begin{align}\label{eq:goodordering}
		\langle \mathbf{v}, \distOnePoint_1 \rangle \leq \langle \mathbf{v}, \distOnePoint_2 \rangle \leq \cdots \leq \langle \mathbf{v}, \distOnePoint_\suppNumber \rangle \, , \ \
		\langle \mathbf{v}, \distTwoPoint_1 \rangle \leq \langle \mathbf{v}, \distTwoPoint_2 \rangle \leq \cdots \leq \langle \mathbf{v}, \distTwoPoint_\suppNumber \rangle \, , \ \
		\langle \mathbf{v}, \distThreePoint_1 \rangle \leq \langle \mathbf{v}, \distThreePoint_2 \rangle \leq \cdots \leq \langle \mathbf{v}, \distThreePoint_\suppNumber \rangle \, .
	\end{align}
	Now observe that with this indexing notation, the value of the integrand in the definition of projected Wasserstein distances $\PW_p(\distone, \disttwo)$, $\PW_p(\distone, \distthree)$, and $\PW_p(\disttwo, \distthree)$  (Equation \eqref{eq:pw}) for this particular projection vector $\mathbf{v}$ are
	\begin{align}
		\frac{1}{\suppNumber}\sum_{\suppIx=1}^\suppNumber
			\|\distOnePoint_\suppIx - \distTwoPoint_\suppIx\|_2^p
			\, , \quad 
		\frac{1}{\suppNumber}\sum_{\suppIx=1}^\suppNumber
			\|\distOnePoint_\suppIx - \distThreePoint_\suppIx\|_2^p
		  \, , \quad
		\frac{1}{\suppNumber}\sum_{\suppIx=1}^\suppNumber
			\|\distTwoPoint_\suppIx - \distThreePoint_\suppIx\|_2^p
		\, ,
	\end{align}
	respectively. Thus, the full projected Wasserstein distances may be expressed as follows:
	\begin{align}
		\PW_p(\distone, \disttwo)& =
		\left\lbrack
			\sum_{(\sigma, \tau, \pi) \in \symgroup{\suppNumber}^3}
			  q(\sigma, \tau, \pi)
				\sum_{\suppIx = 1}^\suppNumber
				  \|\distOnePoint_{\sigma(\suppIx)} - \distTwoPoint_{\tau(\suppIx)}\|_2^p
		\right\rbrack^{1/p} \, , \\
		\PW_p(\distone, \distthree)& =
		\left\lbrack
			\sum_{(\sigma, \tau, \pi) \in \symgroup{\suppNumber}^3}
			  q(\sigma, \tau, \pi)
			    \sum_{\suppIx = 1}^\suppNumber
			      \|\distOnePoint_{\sigma(\suppIx)} - \distThreePoint_{\pi(\suppIx)}\|_2^p
		\right\rbrack^{1/p} \, , \\
		\PW_p(\disttwo, \distthree)& =
		\left\lbrack
			\sum_{(\sigma, \tau, \pi) \in \symgroup{\suppNumber}^3}
			  q(\sigma, \tau, \pi)
			    \sum_{\suppIx = 1}^\suppNumber
				  \|\distTwoPoint_{\tau(\suppIx)} - \distThreePoint_{\pi(\suppIx)}\|_2^p
		\right\rbrack^{1/p} \, ,
	\end{align}
	where $\sigma, \tau, \pi \in \symgroup{\suppNumber}$ are the permutations needed to re-index $(\distOnePoint_\suppIx)_{\suppIx=1}^\suppNumber$, $(\distTwoPoint_\suppIx)_{\suppIx=1}^\suppNumber$, and $(\distThreePoint_\suppIx)_{\suppIx=1}^\suppNumber$, respectively, so that Equation \eqref{eq:goodordering} holds, and $q(\sigma, \tau, \pi)$ is the probability that permutations $\sigma, \tau, \pi$ are required, given that $\mathbf{v}$ is drawn from $\mathrm{Unif}(S^{d-1})$. With these alternative expressions established, the triangle inequality for $\PW_p$ now follows from the standard Minkowski inequality.
\end{proof}

\propIneq*
\begin{proof}
	The inequality between sliced Wasserstein and Wasserstein distances is well-known, and a short proof is given by e.g. \citet{Bonnotte}. For the inequality between Wasserstein and projected Wasserstein distances, write $\distone = \frac{1}{\suppNumber} \sum_{\suppIx = 1}^{\suppNumber} \delta_{\distOnePoint_\suppIx}$ and $\disttwo = \frac{1}{\suppNumber} \sum_{\suppIx = 1}^{\suppNumber} \delta_{\distTwoPoint_\suppIx}$. Now note that
	\begin{align}
		\PW^p_p(\distone, \disttwo)
		& = 
		\mathbb{E}_{\mathbf{v} \sim \mathrm{Unif}(S^{d-1})}
		\left\lbrack\frac{1}{\suppNumber} 				
		\sum_{\suppIx=1}^\suppNumber \|\mathbf{x}_\suppIx - \mathbf{y}_{\sigma_{\mathbf{v}}(\suppIx)} \|_2^p
		\right\rbrack \\
		&\geq
		\mathbb{E}_{\mathbf{v} \sim \mathrm{Unif}(S^{d-1})}
		\left\lbrack
		\min_{\sigma \in \symgroup{\suppNumber}}
		\frac{1}{\suppNumber} 				
		\sum_{\suppIx=1}^\suppNumber \|\mathbf{x}_\suppIx - \mathbf{y}_{\sigma(\suppIx)} \|_2^p
		\right\rbrack  \\
		&= 
		\min_{\sigma \in \mathcal{S}_\suppNumber}
		\frac{1}{\suppNumber} 				
		\sum_{\suppIx=1}^\suppNumber \|\mathbf{x}_\suppIx - \mathbf{y}_{\sigma(\suppIx)} \|_2^p
		 \\
		& =
		\wass^p_p(\distone, \disttwo) \, ,
	\end{align}
	where $\mathcal{S}_\suppNumber$ is the symmetric group, i.e. the space of bijective mappings from $[\suppNumber]$ to itself.
\end{proof}


\section{Additional material relating to Section \ref{sec:ortestimation}}

\subsection{Orthogonal projected Wasserstein estimation}

We present the full algorithm applying orthogonal projection directions to estimation of the projected Wasserstein distance in Algorithm \ref{alg:pw-ort}

\begin{algorithm}
	\caption{Projected Wasserstein estimation}
	\label{alg:pw-ort}
	\begin{algorithmic}[1]
		\REQUIRE $\eta = \frac{1}{\suppNumber} \sum_{\suppIx=1}^\suppNumber \delta_{\distOnePoint_\suppIx}$, \ $\mu = \frac{1}{\suppNumber} \sum_{\suppIx=1}^\suppNumber \delta_{\distTwoPoint_\suppIx}$
		\STATE \textcolor{red}{Sample $(\mathbf{v}_\projIx)_{\projIx=1}^\projNumber \sim \mathrm{UnifOrt}(S^{d-1};N)$}
		\FOR{ $\projIx=1$ \TO $\projNumber$}
		\STATE {Compute projected distributions:}
		\STATE{\ \ $\textstyle (\Pi_{\mathbf{v}_\projIx})_\# \distone = \frac{1}{\suppNumber} \sum_{\suppIx=1}^\suppNumber \delta_{\langle \mathbf{v}_\projIx, \distOnePoint_\suppIx \rangle}$}
		\STATE{\ \ $\textstyle (\Pi_{\mathbf{v}_\projIx})_\# \disttwo = \frac{1}{\suppNumber} \sum_{\suppIx=1}^\suppNumber \delta_{\langle \mathbf{v}_\projIx, \distTwoPoint_\suppIx \rangle}$}
		\STATE{Compute optimal matching for projected distributions:}
		\STATE{\ \ $\sigma_{\mathbf{v}_\projIx} \leftarrow \texttt{argsort}((\langle \mathbf{v}_\projIx , \distOnePoint_\suppIx \rangle)_{\suppIx=1}^\suppNumber, (\langle \mathbf{v}_\projIx , \distTwoPoint_\suppIx \rangle)_{\suppIx=1}^\suppNumber)$}
		\STATE{Compute contribution from coupling:}
		\STATE{\ \ $\textstyle \frac{1}{\suppNumber} \sum_{\suppIx=1}^\suppNumber \|\distOnePoint_\suppIx - \distTwoPoint_{\sigma_{\mathbf{v}_\projIx}(\suppIx)}\|^p $
		}
		\ENDFOR
		\RETURN
		$\widehat{\PW}_p^p(\distone, \disttwo) = \frac{1}{\projNumber} \sum_{\projIx=1}^\projNumber  \frac{1}{\suppNumber}\! \sum_{\suppIx=1}^\suppNumber \|\distOnePoint_\suppIx - \distTwoPoint_{\sigma_{\mathbf{v}_\projIx}(\suppIx)}\|^p $
	\end{algorithmic}
\end{algorithm}

\subsection{Sampling from $\mathrm{UnifOrt}(S^{d-1}; N)$}\label{sec:exactsample}

As described in Definition \ref{def:unifort}, the primary task in sampling from $\mathrm{UnifOrt}(S^{d-1}; N)$ is sampling an orthogonal matrix from Haar measure on the orthogonal group $\ortgroup{d}$. This is a well-studied problem (see e.g. \cite{GenzOrtMatrices}), and we briefly review a method for exact simulation. Algorithm \ref{alg:gs} generates such matrices, and can be understood as follows. Initially, the rows of $\mathbf{A}$ are independent with uniformly random directions. Normalising and performing Gram-Schmidt orthogonalisation results in an ordered set of unit vectors that are uniformly distributed on the Steifel manifold, and hence the matrix obtained by taking these vectors as rows is distributed according to Haar measure on the orthogonal group $\ortgroup{d}$.

\begin{algorithm}
	\caption{Gram-Schmidt orthogonal matrix generation}
	\label{alg:gs}
	\begin{algorithmic}[1]
		\STATE Sample $(\mathbf{A}_{ij})_{i,j=1}^d \overset{\mathrm{i.i.d.}}{\sim} N(0,1)$
		\STATE Normalise the norms of the rows of $\mathbf{A}$ to 1.
		\STATE Perform Gram-Schmidt orthogonalisation on the rows of $\mathbf{A}$ to obtain $\widetilde{\mathbf{A}}$
		\RETURN $\widetilde{\mathbf{A}}$
	\end{algorithmic}
\end{algorithm}

\subsection{Approximate sampling from $\mathrm{UnifOrt}(S^{d-1}; N)$}

The Gram-Schmidt subroutine described in Section \ref{sec:exactsample} has computational cost $\mathcal{O}(d^3)$. Whilst in general, this cost would be dominated by the cost of computing a full Wasserstein distance between point clouds (costing $\mathcal{O}(M^{5/2} \log M)$ in the special case of matching, and at least $\mathcal{O}(M^4)$ more generally), it is desirable to reduce the cost of sampling from $\mathrm{UnifOrt}(S^{d-1}; N)$ further, to make projected/sliced Wasserstein estimation more computationally efficient. A variety of methods for \emph{approximately} sampling from $\mathrm{UnifOrt}(S^{d-1}; N)$ at a cost of $\mathcal{O}(d^2\log d)$ exist (see for example \citep{GenzOrtMatrices,HD1,HD2}), reducing the cost to approximately that of sampling independent projection directions (i.e. $\mathcal{O}(d^2)$). In our experiments, we use Hadamard-Rademacher random matrices to this end; further details are given in Section \ref{sec:hd}.


\subsection{Proofs}

\mseStrat*
\begin{proof}
	Recalling the~notation of~\Cref{def:stratification}, the~MSE of any unbiased estimator is equal to its variance
	\begin{equation*}
		\mathbb{V}(\estim{\projNumber})
		= 
		\frac{\mathbb{V}(f(X))}{\projNumber}
		+
		\frac{1}{\projNumber^2}
		\sum_{i=1}^{\projNumber}
		\sum_{j \neq i}
			\E [f(X_i) f(X_j)]
			-
			\{
				\E [f(X)]
			\}^2
		\, .
	\end{equation*}
	The~latter term on the~r.h.s.\ of the~above equation is equal to zero for the~i.i.d.\ estimator and thus MSE can only be improved if it is negative. By~\Cref{def:stratification}, $\E [f(X_i) f(X_j) \, \vert \, X_i \in A_k, X_j \in A_l] = \E [f(X) \, \vert \, X \in A_k] \E [f(X) \, \vert \, X \in A_l]$ whenever $i \neq j$. We can thus rewrite the~cross covariance as
	\begin{equation*}
		\E [f(X_i) f(X_j)]
		-
		\{
			\E [f(X)]
		\}^2
		=
		\sum_{k=1}^{K}
		\sum_{l=1}^{K}
			(p_{k, l}^{(i, j)} - p_k p_l) s_k s_l
		\, ,
	\end{equation*}
	where $p_{k, l}^{(i, j)} \coloneqq \Prob(X_i \in A_k , X_j \in A_l)$, $p_k = \Prob(X \in A_k)$, and $s_k \coloneqq \E [f(X) \, \vert \, X \in A_k]$. Defining the~matrix $[\mathbf{P}^{(i, j)}]_{k,l} \coloneqq p_{k, l}^{(i, j)}$ and the~vector $[\mathbf{p}]_k \coloneqq p_k$ we have that the~cross-covariance is non-positive for all integrable $f$ iff $\mathbf{P}^{(i, j)} - \mathbf{p} \mathbf{p}^\top$ is negative semi-definite. Observing that the~constraints on bivariate marginals in \Cref{def:stratification} ensure that each $\mathbf{P}^{(i, j)}$ is a~diagonally dominant Hermitian matrix with non-positive entries on the~main diagonal, implying negative semi-definiteness.
	
	To prove existence of a~stratified estimator which strictly improves upon i.i.d., consider the~matrix $\mathbf{P}^{(1, 2)}$ and let $k, l \in [K]$ be the~indices for which $\E [f(X) \, \vert \, X \in A_k] \neq \E [f(X) \, \vert \, X \in A_l]$ and $\Prob(X \in A_k) > 0$, $\Prob(X \in A_l) > 0$. Equate $\mathbf{P}^{(1, 2)} = \mathbf{p} \mathbf{p}^\top$ except for setting  $\mathbf{P}_{k,k}^{(1, 2)} = p_k p_k - \varepsilon$, $\mathbf{P}_{l,l}^{(1, 2)} = p_l p_l - \varepsilon$, and $\mathbf{P}_{k,l}^{(1, 2)} = \mathbf{P}_{k,l}^{(1, 2)} = p_k p_l + \varepsilon$, for some $\varepsilon > 0$ which preserves non-negativity of the~entries of $\mathbf{P}^{(1, 2)}$. If $X_1, X_2$ are sampled independently given $\{A_k\}_{k=1}^K$, and $X_{3}, \ldots, X_\projNumber$ i.i.d.\ $\mathrm{Unif}(S^{\ortd - 1})$ (if $\projNumber > 2$), then $\E [f(X_i) f(X_j)] - \{\E [f(X)]\}^2 < 0$.
\end{proof}

\mseSWPW*
\begin{proof}
	We begin by observing that for $\ortd = 2$, $\mathbf{v} \in S^{\ortd - 1}$ can be parametrised by single parameter $\phi \in [0, 2 \pi)$ as $\mathbf{v} = [\cos(\phi), \sin(\phi)]^\top \in \R{2}$. Denoting $\{\sigma_A , \sigma_B \} = \mathcal{S}_2$ with $\sigma_A(i) = i, i=1,2$ and $\sigma_B(1) = 2, \sigma_B(2) = 1$, we can characterise the~sets $\widetilde{E}_A = \{ \phi \in \R{} \, \vert \, \sigma_A \in \Sigma_{\mathbf{v}} \}$ and $\widetilde{E}_B = \{ \phi \in \R{} \, \vert \, \sigma_B \in \Sigma_{\mathbf{v}} \}$ as follows: a~matching is optimal iff it agrees with the~ordering of $\langle \mathbf{v}, \mathbf{x}_1 \rangle, \langle \mathbf{v}, \mathbf{x}_2 \rangle$ and $\langle \mathbf{v}, \mathbf{y}_1 \rangle, \langle \mathbf{v}, \mathbf{y}_2 \rangle$; therefore we can define
	\begin{align}\label{eq:half_spaces}
	\begin{aligned}
		&H_{\mathbf{x}}^+ = \{\phi \in \R{} \, \vert \, \langle \mathbf{v} , \mathbf{x}_1 - \mathbf{x}_2 \rangle \geq 0\}  \, , 
		&&H_{\mathbf{x}}^- = \{\phi \in \R{} \, \vert \, \langle \mathbf{v}, \mathbf{x}_1 - \mathbf{x}_2 \rangle \leq 0\} \, , \\
		&H_{\mathbf{y}}^+ = \{\phi \in \R{} \, \vert \, \langle \mathbf{v}, \mathbf{y}_1 - \mathbf{y}_2 \rangle \geq 0\}  \, , 
		&&H_{\mathbf{y}}^- = \{\phi \in \R{} \, \vert \, \langle \mathbf{v}, \mathbf{y}_1 - \mathbf{y}_2 \rangle \leq 0\} \, ,
	\end{aligned}
	\end{align} 
	and observe $\widetilde{E}_A \coloneqq (H_\mathbf{x}^+ \cap H_\mathbf{y}^+) \cup (H_\mathbf{x}^- \cap H_\mathbf{y}^-)$ and $\widetilde{E}_B \coloneqq (H_\mathbf{x}^+ \cap H_\mathbf{y}^-) \cup (H_\mathbf{x}^- \cap H_\mathbf{y}^+)$ As argued in the~main text, $\{\phi \in \R{} \, \vert \, \langle \mathbf{v}, \mathbf{x}_1 - \mathbf{x}_2 \rangle = 0 \} \cap [0, 2 \pi)$ and $\{\phi \in \R{} \, \vert \, \langle \mathbf{v}, \mathbf{y}_1 - \mathbf{y}_2 \rangle = 0 \} \cap [0, 2 \pi)$ are null events except for the~degenerate case when $\mathbf{x}_1 = \mathbf{x}_2$ or $\mathbf{y}_1 =\mathbf{y}_2$ for which both couplings are equivalent in terms of transportation cost, and thus we can safely treat $\{ \widetilde{E}_A \cap [0, 2\pi), \widetilde{E}_B \cap [0, 2\pi) \}$ as a~disjoint partition of $[0, 2 \pi)$, selecting a~single coupling deterministically if both are optimal.
	
	Observe that $| \langle \mathbf{v}, \mathbf{x}_i - \mathbf{y}_j \rangle| = | \langle - \mathbf{v}, \mathbf{x}_i - \mathbf{y}_j \rangle|$ and thus $\mathbf{v}$ and $-\mathbf{v}$ always induce the~same optimal couplings. This means that orthogonal coupling of $\mathbf{v}_1 = [\cos (\phi_1), \sin(\phi_1)]^\top$ and $\mathbf{v}_2 = [\cos(\phi_2) , \sin (\phi_2)]^\top$ is equivalent to setting $\phi_2 = \phi_1 \pm \tfrac{\pi}{2}$, which means both of the~orthogonal vectors induce the~same set of optimal couplings. Therefore 
	\begin{align*}
		\Prob( (\phi_1, \phi_2) \in \widetilde{E}_k \times \widetilde{E}_l \cap [0 , 2 \pi)^2) 
		&= \tfrac{1}{2} \Prob(\phi_1 \in \widetilde{E}_k \cap \{\widetilde{E}_l + \tfrac{\pi}{2} \} \cap [0 , 2 \pi)) + \tfrac{1}{2} \Prob(\phi_1 \in \widetilde{E}_k \cap \{\widetilde{E}_l - \tfrac{\pi}{2} \} \cap [0 , 2 \pi))
		\\
		&= \phantom{\tfrac{1}{2}} \Prob(\phi_1 \in \widetilde{E}_k \cap \{\widetilde{E}_l + \tfrac{\pi}{2} \} \cap [0 , 2 \pi))
		\, ,
	\end{align*}
	as $\widetilde{E}_k \cap \{\widetilde{E}_l + \tfrac{\pi}{2} \} = \widetilde{E}_k \cap \{\widetilde{E}_l - \tfrac{\pi}{2} \} $, for any $k, l \in \{A, B\}$. Because $\mathbf{v} \sim \mathrm{Unif}(S^1)$ is equivalent to $\phi \sim \mathrm{Unif}([0, 2\pi))$,
	\begin{align*}
		&\Prob(\phi \in \widetilde{E}_A \cap \{\widetilde{E}_A + \tfrac{\pi}{2} \} \cap [0, 2 \pi)) = 2 [(p - \tfrac{1}{2}) \vee 0 ] \, ,
		\\
		&\Prob(\phi \in \widetilde{E}_B \cap \{ \widetilde{E}_B + \tfrac{\pi}{2} \} \cap [0, 2 \pi)) = 2 [(\tfrac{1}{2} - p) \vee 0] \, ,
		\\
		&\Prob(\phi \in \widetilde{E}_A \cap \{ \widetilde{E}_B + \tfrac{\pi}{2} \} \cap [0, 2 \pi)) = \Prob(\phi \in \widetilde{E}_B \cap \{ \widetilde{E}_A + \tfrac{\pi}{2} \} \cap [0, 2 \pi) ) = \tfrac{1}{2} - |\tfrac{1}{2} - p|
		\, ,
	\end{align*}
	with $p \coloneqq \Prob(\phi \in \widetilde{E}_A)$. The~above equations combined with the~definition of orthogonal sampling and the~piecewise constant character of $f(\mathbf{v})$ in the~case of the~orthogonal estimation of the~projected Wasserstein distance implies that all conditions of~\Cref{def:stratification} are satisfied, proving the~first part of our proposition.
	
	Turning to estimation of the~sliced Wasserstein distance, we will reduce the~computation of the~MSE for both the~i.i.d.\ and the~orthogonal case to analytically solvable integrals, and use those to find examples of datasets for which either i.i.d.\ or orthogonal estimation is superior to the~other. First, the~expectation
	\begin{equation*}
		\E[
			\wass^p_p((\Pi_{\mathbf{v}})_\# \distone,(\Pi_{\mathbf{v}})_\# \disttwo)
		]
		=
		\frac{1}{2}
		\sum_{k \in \{A , B\}}
			\Prob(\mathbf{v} \in E_k) \,
			\E[
				|\langle \mathbf{v}, \mathbf{x}_1  - \mathbf{y}_{\sigma_k(1)} \rangle|^p
				+
				|\langle \mathbf{v}, \mathbf{x}_2  - \mathbf{y}_{\sigma_k(2)} \rangle|^p
				\, \vert \, \mathbf{v} \in E_k
			]
		\, ,
	\end{equation*}
	can be solved by using the~harmonic addition identity $\langle \mathbf{v}, \mathbf{z} \rangle = \| \mathbf{z} \| \cos(\phi - \rho)$ with $\rho$ the~angle between $\mathbf{z}$ and the~x-axis $[1, 0]^\top \in \R{2}$, for $\mathbf{v} = [\cos(\phi), \sin(\phi)]^\top$ and any $\mathbf{z} \in \R{2}$. Evaluation of the~above expectation then reduces to computation of a~weighted sum of integrals of the~form $\int_{\widetilde{E}_k \cap [0, 2\pi)} |\! \cos(\phi - \rho)|^p \mathrm{d}\phi$ which can be solved using basic identities. The~approach is analogous for the~second moment, with the~only difference being that the~integrals will be of the~form $\int_{\widetilde{E}_k \cap [0, 2 \pi)} |\! \cos(\phi - \rho)|^p |\! \cos(\phi - \gamma)|^p \mathrm{d}\phi$, where $\rho$ and $\gamma$ are the~relevant angles between $\mathbf{x}_i - \mathbf{y}_j$ and the~x-axis. Finally, the~expression
	\begin{align*}
		&\E[
			\wass^p_p((\Pi_{\mathbf{v}_1})_\# \distone,(\Pi_{\mathbf{v}_1})_\# \disttwo)
			\wass^p_p((\Pi_{\mathbf{v}_2})_\# \distone,(\Pi_{\mathbf{v}_2})_\# \disttwo)
		]
		\\
		&=
		\frac{1}{2^2}
		\sum_{k, l \in \{A , B\}}
		\sum_{m, n}^2
			\Prob((\mathbf{v}_1 \mathbf{v}_2) \in E_k \times E_l) \,
			\E[
				|\langle \mathbf{v}, \mathbf{x}_m  - \mathbf{y}_{\sigma_k(m)} \rangle|^p
				|\langle \mathbf{v}, \mathbf{x}_n  - \mathbf{y}_{\sigma_l(n)} \rangle|^p
				\, \vert \, (\mathbf{v}_1 \mathbf{v}_2) \in E_k \times E_l
			]
		\, ,
	\end{align*}
	can be computed in fashion similar to that of the~second moment, with the~integrals now being $\int_{\widetilde{E}_k \cap \{\widetilde{E}_l + \tfrac{\pi}{2}\} \cap [0, 2 \pi)} |\! \cos(\phi - \rho)|^p |\! \cos(\phi - \gamma)|^p \mathrm{d}\phi$.
	
	Putting all these together, an~example of a~dataset for which i.i.d.\ estimation of the~$1$-sliced Wasserstein distance strictly dominates orthogonal is $\mathbf{x}_1 = [1.23, -2.17]^\top$, $\mathbf{x}_2 = [-2, -0.65]^\top$, $\mathbf{y}_1 = [-0.14, -0.93]^\top$, $\mathbf{y}_2 = [-0.82, 0.43]^\top$, where the~MSEs of the~i.i.d.\ and orthogonal estimators for $\projNumber=2$ are respectively $\approx 0.011900$ and $\approx 0.017085$, and the~distance itself equals $\approx 1.082029$. A~dataset for which the~orthogonal coupling dominates is $\mathbf{x}_1 = [1, 1]^\top$, $\mathbf{x}_2 = [0, 0]^\top$, $\mathbf{y}_1 = [0, -\tfrac{1}{2}]^\top$, $\mathbf{y}_2 = [1, -1]^\top$, where the~MSEs of the~i.i.d.\ and orthogonal estimators for $\projNumber=2$ are respectively $\approx 0.073996$ and $\approx 0.006047$, and the~distance itself equals $\approx 0.795774$. Neither i.i.d.\ nor orthogonal estimation thus strictly dominates the~other in terms of MSE across all $p$-sliced Wasserstein distances.
\end{proof}


\section{Appendix on Experiments}

\subsection{Generative Modelling: Auto-encoders}

We consider sliced Wasserstein Auto-encoders (AE) \citep{swae} on MNIST dataset. MNIST dataset contains $50000$ training gray images each with dimension $28 \times 28$. To facilitate HD projection, we augment the images to $32 \times 32$ by padding zeros. Hence in our case the observations have dimension $\mathbf{x}\in \mathbb{R}^{32 \times 32}$. 

\paragraph{Implementation Details.} The AEs have the same architecture as introduced in \cite{swae}. The hidden code $\mathbf{z}$ has dimension $h = 128$. The prior distribution $p(\mathbf{z})$ is chosen to be a uniform distribution inside $[-1,1]^h$. For each iteration, we take a full sweep over the dataset in a random order. All implementations are in Tensorflow \citep{abadi2016tensorflow} and Keras \citep{chollet2015keras}, we also heavily refer to the code of the original authors of \citep{swae} \footnote{\url{https://github.com/skolouri/swae}}. 

\subsection{Generative Modelling: sliced Wasserstein vs. projected Wasserstein}

\paragraph{Background.}
Here we present the comparison between training AE using sliced Wasserstein distance vs. projected Wasserstein distance. Recall that the training objective of the AE is in general
\begin{align}
L_{\text{ae}}(\theta,\phi)\! =\! \mathbb{E}_{\mathbf{x} \sim P(X)}[\|g_\phi(f_\theta(\mathbf{x})) - \mathbf{x}\|]\! + D(p_\theta(\mathbf{z}), p(\mathbf{z})), \nonumber
\end{align}
where $D(\cdot,\cdot)$ can be some proper discrepancy measure between two distributions. In practice, $D(\cdot,\cdot)$ can be KL-divergence \citep{kingma2013auto}, sliced Wasserstein distance \citep{swae} or projected Wasserstein distance (this work). All the aforementioned alternates allow for fast optimization using gradient descent on the discrepancy measures. 

\paragraph{Implementation Details.} The AEs have the same architecture as introduced in \cite{swae}. The hidden code $\mathbf{z}$ has dimension $h = 2$. The prior distribution $p(\mathbf{z})$ is chosen to be a uniform distribution on the interior of the 2D circle with radius $1$. The other implementation details are the same as above.

\paragraph{Results.} We compare the posterior hidden codes $\mathbf{z} \sim p_\theta(\mathbf{z})$ generated by the trained encoders under sliced Wasserstein distance (left) vs. projected Wasserstein distance (right) in Figure \ref{fig:hiddencodes}. Though both hidden code distributions largely match that of the prior distribution $p(\mathbf{z})$, the hidden codes trained by sliced Wasserstein distance tend to collapse to the center (the distribution has a slightly smaller effective support). This observation is compatible with our observations in the generator network experiments presented in the next section.

\begin{figure}\centering
	\null\hfill
	\includegraphics[keepaspectratio, width=.49\columnwidth]{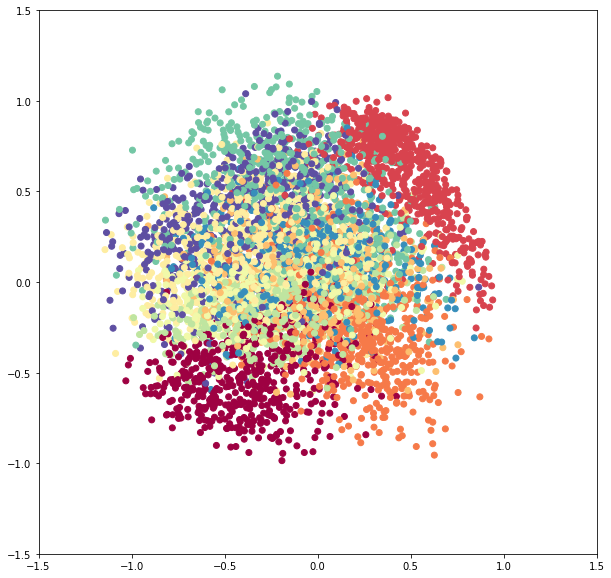}\hfill	\includegraphics[keepaspectratio, width=.49\columnwidth]{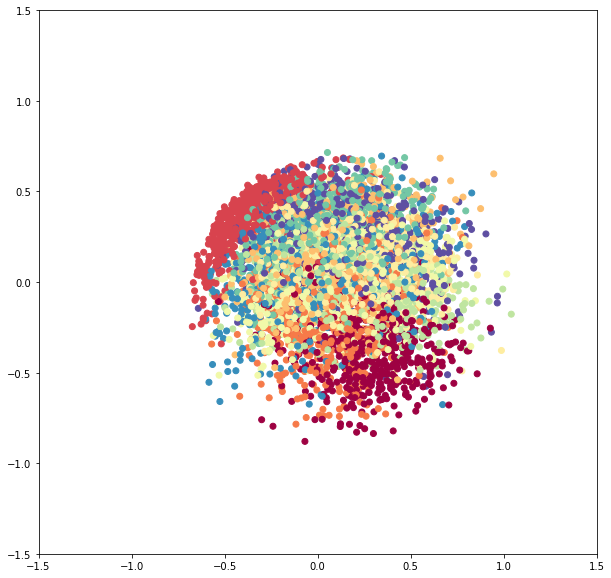}\hfill\null
	\caption{Training AEs using two sets of distance measure (left: sliced Wasserstein distance, right: projected Wasserstein distance): We show hidden codes $p_\theta(\mathbf{z})$ generated by the encoder after training. Though both distributions generally match the prior distribution $p(\mathbf{z})$, the distribution trained with projected Wasserstein distance tends to collapse to the center.}
	\label{fig:hiddencodes}
\end{figure}

\subsection{Generative Modelling: Generator Networks}
\paragraph{Background.}  A generator network $G_\theta$ takes as input a noise sample $\epsilon \sim \rho_0(\cdot)$ from an elementary distribution $\rho_0$ (e.g. Gaussian) and output a sample in the target domain $\mathcal{X}$ (e.g. images), i.e. $X = G_\theta(\epsilon) \in \mathcal{X}$. Let $P_\theta(X)$ be the implicit distribution induced by the network $G_\theta$ and noise source $\rho_0$ over samples $X$. We also have a target distribution $\hat{P}(X)$ (usually an empirical distribution constructed using samples) that we aim to model. The objective of generative modeling is to find parameters $\theta$ such that $P_\theta(X) \approx \hat{P}(X)$ by minimizing certain discrepancies 
\begin{align}
D(P_\theta(X), \hat{P}(X)), \label{eq:generative}
\end{align}
for some discrepancy measure $D(\cdot,\cdot)$. When $D(\cdot,\cdot)$ is taken to be the Jenson-Shannon divergence, we recover the objective of Generative Adversarial Networks (GAN) \citep{goodfellow2014generative}. Recently, \citet{swgenmodels,swgan} propose to take $D(\cdot,\cdot)$ to be the sliced Wasserstein distance, so as to bypass the potential instability due to min-max optimization formulation with GAN. Similarly, $D(\cdot,\cdot)$ can be projected Wasserstein distance. Here, we show the empirical differences of these two generative models (sliced Wasserstein generative network vs. projected Wasserstein generative network) with an illustrating example.

\paragraph{Setup.} We take $\mathcal{X} = \mathbb{R}^2$ and $\hat{P}(X)$ to be the empirical distribution formed by samples drawn from a mixture of Gaussians. The mixture contains $16$ components with centers evenly spaced on the 2-D grid with horizontal/vertical distance between neighboring centers to be $0.3$. Each Gaussian is factorized wit diagonal variance $0.1^2$. The samples are illustrated as the red points in Figure \ref{fig:gan} below. 

\paragraph{Implementation Details.} The generators are parameterized as neural networks which take $2-$dimensional noise (drawn from a standard factorized Gaussian) as input and output samples in $\mathcal{X} = \mathbb{R}^2$. The networks have two hidden layers each with $256$ units, with $\text{relu}$ nonlinear function activation in between. The final output layer has $\text{tanh}$ nonlinear activation. We train all models with Adam Optimizer and learning rate $10^{-4}$ until convergence. All implementations are in Tensorflow \citep{abadi2016tensorflow}, we also heavily refer to a set of wonderful open source projects \footnote{\url{https://github.com/kvfrans/generative-adversial}} \footnote{\url{https://github.com/ishansd/swg}}.

\paragraph{Results.} The results for generative modelling are in Figure \ref{fig:gan} (left for sliced Wasserstein distance, right for project Wasserstein distance). Red samples are those generated from the target distribution. Blue samples are those generated from the generator network after training until convergence. We observe that samples generated from these two models exhibit distinct features: under sliced Wasserstein distance, the samples tend to be more widespread and in this case capture the modes on the perimeter of the Gaussian mixtures. On the other hand, under projected Wasserstein distance, the samples tend to collapse to the center of the target distribution and only capture modes in the middle.

\begin{figure}\centering
	\null\hfill
	\includegraphics[keepaspectratio, width=.49\columnwidth]{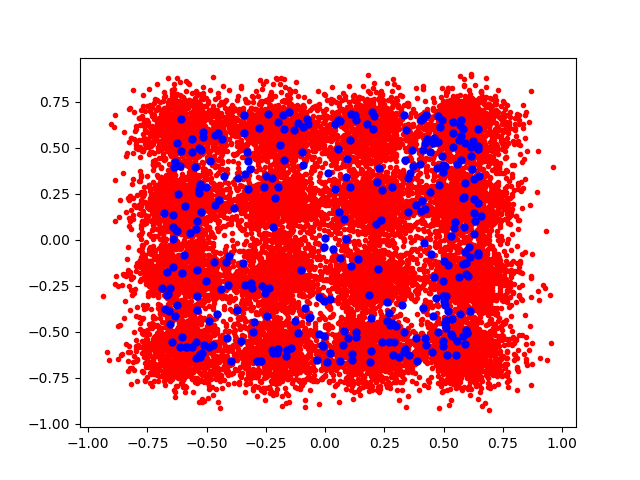}\hfill	\includegraphics[keepaspectratio, width=.49\columnwidth]{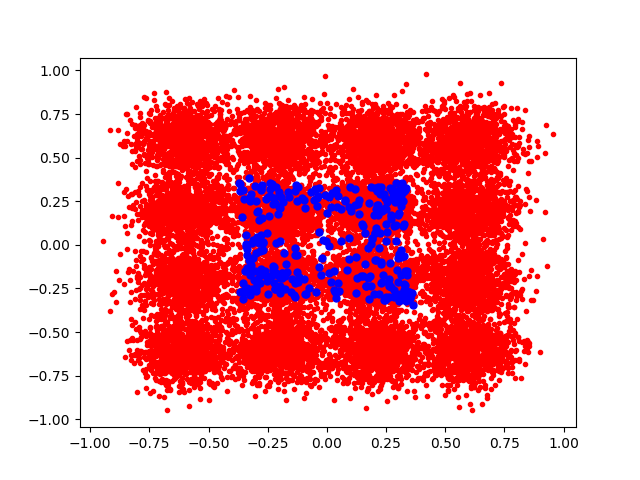}\hfill\null
	\caption{Training generators using two sets of distance measure (left: sliced Wasserstein distance, right: projected Wasserstein distance): Red samples are form the target distributions, which are drawn from a mixture of Gaussians with $16$ mixtures. Blue samples are those generated from the generator network after convergence. Under sliced Wasserstein distance the samples tend to spread out and capture modes at the perimeter of the mixtures, while under projected Wasserstein distance the samples tend to cluster in the center and capture modes in the middle.}
	\label{fig:gan}
\end{figure}

\subsection{Reinforcement Learning}

\paragraph{Background.}  Vanilla policy gradient updates $\theta_{\text{new}} \leftarrow \theta_{\text{old}} + \alpha \nabla_{\theta_{\text{old}}} J(\pi_{\theta_{\text{old}}})$ suffer from occasionally large step sizes, which lead the policy to collect bad samples from which one could never recover \citep{schulman2015trust}. Trust region policy optimization (TRPO) \citep{schulman2015trust} propose to constrain the update using KL divergence $\mathbb{KL}[\pi_{\theta_{old}} || \pi_{\theta_{new}}] \leq \epsilon$ for some $\epsilon > 0$, which can be shown to optimize a lower bound of the original objective and significantly stabilize learning in practice. Recently, \citet{zhang2018policy} interpret policy optimization as discretizing the differential equation of the Wasserstein gradient flows, and propose to construct trust regions using Wasserstein distance. In general, trust region constraints are enforced by $D(\pi_{\theta_{\text{old}}},\pi_{\theta_{\text{new}}}) \leq \epsilon$ for some $\epsilon$, where \citet{schulman2015trust} use KL divergence while \citet{zhang2018policy} use Wasserstein distance.

Instead of constructing the constraints explicitly, one can adopt a penalty formulation of the trust region and apply \citep{schulman2017proximal,zhang2018policy}
\begin{align}
\theta_{\text{new}} \leftarrow \theta_{\text{old}} + \alpha \nabla_{\theta_{\text{old}}} (J(\pi_{\theta_{\text{old}}}) - \lambda D(\pi_{\theta_{\text{old}}},\pi_{\theta_{\text{new}}})), \nonumber 
\end{align}
where $\lambda > 0$ is a trade-off constant. The above updates encourage the new policy $\pi_{\theta_{\text{new}}}$ to achieve higher rewards but also stay close to the old policy for stable updates.

Due to the close connections between various Wasserstein distance measures, we propose to set $D(\cdot,\cdot)$ as either sliced Wasserstein distance or projected Wasserstein distance. Since projected Wasserstein distance corrects for the implicit "bias" in the sliced Wasserstein distance, we expect the corresponding trust region to be more robust and can better stabilize on-policy updates.

\paragraph{Implementation Details.} The policy $\pi_\theta$ is parameterized as feed-forward neural networks with two hidden layers, each with $h=64$ units with $\text{tanh}$ non-linear activations. The value function baseline is a neural network with similar architecture. To implement vanilla policy optimization, we use PPO with very large clipping rate $\epsilon = 10.0$, which is equivalent to no clipping. We set the learning rate to be $\alpha = 3\cdot 10^{-5}$ and the trade-off constant to be $\lambda = 0.001$ for the trust region. All implementations are based on OpenAI baseline \citep{baselines} and benchmark tasks are from OpenAI gym \citep{brockman2016openai}. 

\subsection{Hadamard-Rademacher random matrices}\label{sec:hd}
Here, we give brief details around Hadamard-Rademacher random matrices, which are studied in Section \ref{sec:genmod} as an approximate alternative to using random orthogonal matrices drawn from $\mathrm{UnifOrt}(S^{d-1}; d)$. These random matrices have been used as computationally cheap alternatives to exact sampling from  $\mathrm{UnifOrt}(S^{d-1}; d)$ in a variety of applications recently; see e.g. \citep{HD1,HD2}. A 1-block Hadamard-Rademacher matrix is simulated by taking $\mathbf{H}$ to be a normalised Hadamard matrix in $\mathbb{R}^{d \times d}$, and $\mathbf{D}$ to be a random diagonal matrix, with independent Rademacher ($\mathrm{Unif}(\{\pm 1\})$) random variables along the diagonal. The Hadamard-Rademacher matrix is then given by the product $\mathbf{H}\mathbf{D}$. Multi-block Hadamard-Rademacher random matrices are given by taking the product of several independent 1-block Hadamard-Rademacher random matrices.

\end{document}